\newtheorem{lemma}{Lemma}
\newtheorem{assumption}{Assumption}
\newcommand{\removelatexerror}{\let\@latex@error\@gobble}
\begin{document}

\title{Line-based 6-DoF Object Pose Estimation and Tracking With an Event Camera}

\author{Zibin Liu, Banglei Guan, \textit{Member, IEEE}, Yang Shang, Qifeng Yu and Laurent Kneip, \textit{Senior Member, IEEE}
\thanks{The authors would like to acknowledge the funding support provided by projects 12372189 and 62250610225 by the National Natural Science Foundation of China, project 2023JJ20045 by the Hunan Provincial Natural Science Foundation for Excellent Young Scholars, as well as projects 22DZ1201900, 22ZR1441300, and dfycbj-1 by the Natural Science Foundation of Shanghai. \textit{(Corresponding authors: Banglei Guan; Yang Shang)}
	
Zibin Liu, Banglei Guan, Yang Shang, and Qifeng Yu are with the College of Aerospace Science and Engineering, National University of Defense Technology, Changsha 410073, China. (e-mail: liuzibin19@nudt.edu.cn; guanbanglei12@nudt.edu.cn; shangyang1977@nudt.edu.cn; yuqifeng@nudt.edu.cn).
     
Laurent Kneip is with the Mobile Perception Lab of School of Information Science and Technology, ShanghaiTech University, Shanghai 201210, China. (e-mail: lkneip@shanghaitech.edu.cn).}
}

\markboth{ }
{Shell \MakeLowercase{\textit{et al.}}: A Sample Article Using IEEEtran.cls for IEEE Journals}

\IEEEpubid{0000--0000/00\$00.00~\copyright~2024 IEEE}

\maketitle

\begin{abstract}
Pose estimation and tracking of objects is a fundamental application in 3D vision. Event cameras possess remarkable attributes such as high dynamic range, low latency, and resilience against motion blur, which enables them to address challenging high dynamic range scenes or high-speed motion. These features make event cameras an ideal complement over standard cameras for object pose estimation. In this work, we propose a line-based robust pose estimation and tracking method for planar or non-planar objects using an event camera. Firstly, we extract object lines directly from events, then provide an initial pose using a globally-optimal Branch-and-Bound approach, where 2D–3D line correspondences are not known in advance. Subsequently, we utilize event-line matching to establish correspondences between 2D events and 3D models. Furthermore, object poses are refined and continuously tracked by minimizing event-line distances. Events are assigned different weights based on these distances, employing robust estimation algorithms. To evaluate the precision of the proposed methods in object pose estimation and tracking, we have devised and established an event-based moving object dataset. Compared against state-of-the-art methods, the robustness and accuracy of our methods have been validated both on synthetic experiments and the proposed dataset. The source code is available at \url{https://github.com/Zibin6/LOPET}.
\end{abstract}

\begin{IEEEkeywords}
Event camera, pose estimation, object tracking, robust estimation.
\end{IEEEkeywords}

\section{Introduction}
\label{sec:intro}
\IEEEPARstart{O}{bject} pose estimation and tracking is a critical task in computer vision and robotics communities, with a broad range of applications in robotic grasping, augmented reality and virtual reality. In recent years, great progress has been made on this topic, and many different sensor modalities have been proposed, such as monocular cameras~\cite{konishi2016fast}, stereo cameras~\cite{barrois20133d}, and depth cameras~\cite{jain2011real}. Despite these advancements, traditional cameras still encounter challenges in a lot of scenarios. For instance, objects move so fast that the images are blurred; challenging illumination conditions result in rapid changes in lighting during a shot; reflections and specularities can cause confusion in object tracking~\cite{vincent2005monocular}. These issues lead us to wonder if there exist better visual solutions.

\IEEEpubidadjcol
\begin{figure}[t] 
	\centering{\includegraphics[width=9cm,height=9.5cm]{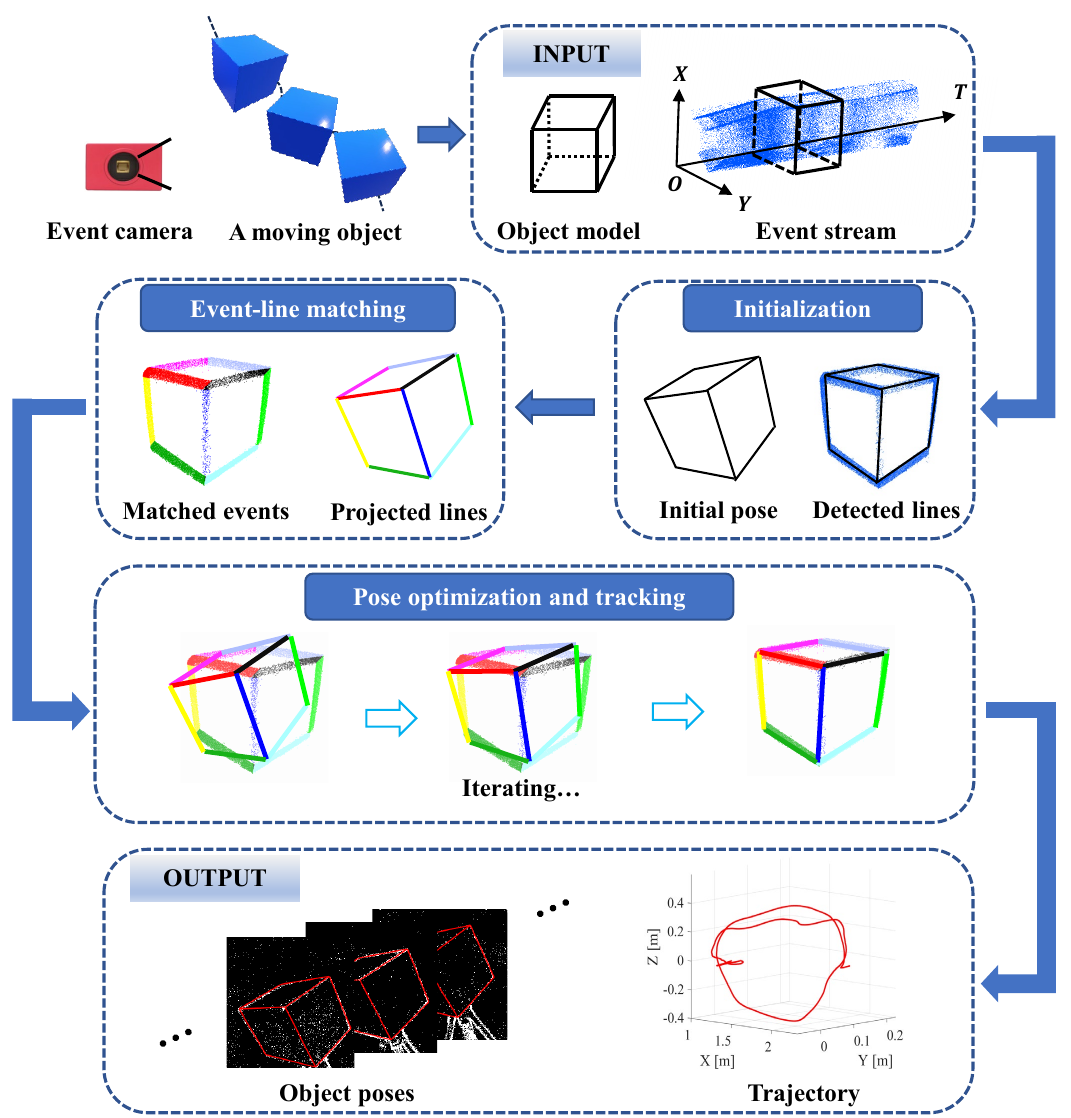}}
	\caption{Block diagram of the proposed method. The core of the method consists of event-based line detection, initial pose estimation, event-line matching, and pose optimization and tracking. Please note that initialization is performed only once. The input to the system comprises the event stream and object model, and the output consists of object poses and trajectories.}
	\label{fig1}
\end{figure}

As a bio-inspired sensor, the event camera has demonstrated great potential to overcome these issues~\cite{gallego_event-based_2022}. Unlike traditional frame-based cameras that capture images at a fixed rate, event cameras asynchronously record per-pixel brightness changes, called ``events''. The output of these events encodes the location, time and polarity. Event cameras possess distinct advantages over traditional cameras: high temporal resolution, low latency, high dynamic range (HDR) and low power consumption. Furthermore, this frame-free approach results in efficiency augmented by circumventing the processing of redundant data (e.g., static background). These characteristics make the sensor particularly well-suited for addressing various visual tasks, especially in challenging scenes, such as high-speed motions and HDR scenes. However, the event camera, being a novel sensor, presents several technical issues that necessitate careful consideration and resolution. To fully explore the utilization of event cameras in object pose estimation and tracking, it is necessary to tackle the following issues.

One of the primary issues with current event cameras is their susceptibility to noise interference. These sensors exhibit greater noise than standard cameras~\cite{gallego_event-based_2022}. To eliminate false positives arising from noise, it is crucial to differentiate events generated by moving objects from noise. Robust estimation is a fundamental tool employed in computer vision to mitigate the impact of noise and outliers~\cite{huber1973robust,rousseeuw1984robust}. To effectively distinguish between events and noise, as well as to further enhance the precision of object pose tracking, it may be beneficial to combine the tracking method with robust estimation. This motivates us to assign different weights to events using robust estimators based on the residuals of event-line distances.	

Another fundamental problem is data association, i.e., establishing correspondences between events and 3D models of objects. The most straightforward approach involves extracting 2D features from events and subsequently aligning them with the 3D features of object models. Each individual event carries little information, lacking texture, color, and other relevant details. Therefore, it is difficult to extract and track event-based feature points accurately~\cite{gallego_event-based_2022}. On the other hand, the majority of events are primarily triggered by the edges of moving objects, which can be effectively represented as multiple interconnected lines. Moreover, lines demonstrate superior stability compared to points~\cite{opromolla2017review}, which exhibit greater resilience to lighting variations and are less susceptible to interference from clutter and noise. Hence, it may be an effective solution for event-based data association by connecting events with the lines of object models.

Regarding the above issues, we propose a line-based object pose estimation and tracking method with an event camera, which is summarized in the block diagram of Fig.~\ref{fig1}. First, the lines of objects are extracted directly from the event cluster. Subsequently, we utilize these lines to estimate the initial pose of objects by Branch-and-Bound (BnB), without any prior knowledge of 2D-3D correspondences. We then employ an event-line matching strategy to establish associations between events and the projected lines of object models. Ultimately, we further refine the object pose and achieve continuous pose tracking by minimizing event-line distances. Additionally, robust estimation algorithms are added to the pose optimization and tracking module, effectively distinguishing events from noise. Due to the lack of appropriate datasets, we provide an event-based object motion dataset, covering several challenging conditions (e.g., sensor noise, textureless specular objects, rapidly changing illumination, etc.). Extensive experiments are evaluated based on synthetic data and our self-collected event dataset, demonstrating the superiority of our method over state-of-the-art methods.

In summary, our main contributions are as follows:
\begin{itemize}
	\item We develop a line-based object pose estimation and tracking framework that works directly on events, including event-based line detection, initial pose estimation, event-line matching, pose optimization and tracking.
	\item We present a line-based object pose initialization method that utilizes a globally-optimal BnB algorithm, without requiring prior knowledge of correspondences between 2D events and 3D models.
	\item We propose a pose optimization method that leverages the unique properties of each event, combined with various robust estimation algorithms. Moreover, we build an event-based object motion dataset for method verification. 
\end{itemize}

The paper is organized as follows: Sec.~\ref{sec:related} reviews related literature on object pose estimation and tracking using either images or events. Sec.~\ref{sec:method} introduces the line-based object pose estimation and tracking method. Sec.~\ref{sec:experi} elaborates experiment designs and results compared against state-of-the-art methods. Finally, Sec.~\ref{sec:concl} presents the conclusion of the paper.

\section{Related Work}
\label{sec:related}
Using vision sensors for object pose estimation and tracking has been extensively studied over the past decades. We briefly review the two primary categories below: those that leverage traditional RGB images and those that employ events.

\subsection{Pose from RGB images}
Feature-based methods work well for monocular cameras, where features are easy to detect and track. Generally, local features (e.g., points, lines, etc.) are extracted from images and subsequently matched with the 3D features of object models. This process establishes 2D-3D feature correspondences for pose estimation of objects. Feature-based methods can be roughly classified into the following two categories.

When 2D-3D correspondences are known, this becomes the Perspective-n-Point (PnP) or Perspective-n-Line (PnL) problems. The minimum requirement for this problem is at least three correspondences, as each correspondence can provide two dimensions of constraints on the 6-DoF pose. If three correspondences of points or lines are present, it is widely recognized as the P3P~\cite{kneip2011novel} or P3L~\cite{dhome1989determination} problems. Nevertheless, the solutions to these two problems are not uniquely determined. If the number of correspondences is greater than or equal to four, a series of efficient and stable algorithms have already been developed, e.g., EPnP~\cite{lepetit2009epnp}, UPnP~\cite{kneip2014upnp}, ASPnL~\cite{xu2016pose}, etc. These algorithms have developed to a mature stage.

When correspondences cannot be established a priori, this gives rise to the simultaneous pose and correspondence determination (SPCD) problem. In such cases, finding correspondences across the two modalities and solving the pose become more complex. Several representative solutions are already available, such as SoftPOSIT~\cite{david2004softposit} and BlindPnP~\cite{moreno2008pose}. However, both solutions require a pose prior. Otherwise, they are susceptible to being trapped in local optima. RANSAC-like approaches~\cite{fischler1981random} do not suffer from this issue, but the computation for a large number of points or lines is time-consuming. Recently, there have emerged several globally optimal methods for the SPCD problem~\cite{Campbell_2017_ICCV,campbell2019alignment}. As one of the most representative works, GOPAC~\cite{Campbell_2017_ICCV} leverages BnB to jointly search the globally optimal camera pose in $SE(3)$. However, the objective function is discrete and presents challenges in optimization. Campbell \emph{et al.}~\cite{campbell2019alignment} cast this problem as a 2D–3D mixture alignment problem and provide the $L_2$ density distance minimization solution. Recent advances in the machine learning community have developed various methods to deal with this problem. For instance, Mask2CAD~\cite{kuo2020mask2cad} detects objects from images, and performs simultaneous 5-DoF alignment and retrieval of 3D CAD models to the detected regions. ROCA~\cite{gumeli2022roca} is an end-to-end approach that achieves accurate and robust 9-DoF alignment and retrieval between 2D images and 3D CAD models. Data-driven methods for object pose estimation and tracking, such as MH6D~\cite{10433529}, HFF6D~\cite{liu2022hff6d}, utilizing RGB-D images as input, effectively address the limitations of monocular depth invisibility~\cite{10043016}.

Feature-based methods rely on sufficient textures on the object for feature extraction. To overcome this limitation, template-based methods can be useful for low-textured or textureless objects~\cite{hinterstoisser2011gradient}. These methods collect a set of template images of objects taken from different viewpoints, and proceed to estimate the object pose by identifying the template that exhibits the highest degree of similarity to the captured image of objects. Muñoz \emph{et al.}~\cite{munoz2016fast} adopt a coarse initialization based on edge correspondences for pose estimation of texture-less objects, and further refine their poses by utilizing RAPID-HOG distances.
Recent success in visual tasks has inspired data-driven methods to apply deep networks for object pose estimation. Nguyen \emph{et al.}~\cite{nguyen2022templates} recognize objects and estimate their 3D poses based on local representations. Their method can generalize to new objects without the need for retraining. Park \emph{et al.}~\cite{Park_2020_CVPR}  propose LatentFusion, an end-to-end differentiable reconstruction and rendering pipeline, specifically designed for pose estimation of unseen objects. The accuracy of template-based methods is significantly influenced by the scale of the template dataset: the larger the dataset, the more closely the estimated parameters align with the actual pose.

\subsection{Pose from events}
Early event-based works focus more on the pose estimation of cameras. At first, researchers deal with simple 3-DoF motion estimation (e.g., planar~\cite{weikersdorfer2013simultaneous} or rotational~\cite{bryner2019event}). More recently, there have been attempts to tackle the problem of event-based simultaneous localization and mapping (SLAM)~\cite{chamorro2022event} and 3D reconstruction~\cite{s11263-017-1050-6}. Chamorro \emph{et al.}~\cite{chamorro2022event} propose an event-based line SLAM method, which estimates the 6-DoF pose of an event camera, and achieves scene reconstruction simultaneously. Gentil \emph{et al.}~\cite{le_gentil_idol_2020-1} present a line-based framework for IMU-DVS odometry, which directly leverages asynchronous events without any frame-like accumulation.

Owing to its specific characteristics, the event camera has the ability to track fast-moving objects. The first 6-DoF object pose estimation algorithm using an event camera is proposed by~\cite{reverter2016neuromorphic}. Given prior knowledge of the object model and its initial pose, their method estimates and tracks moving objects by relating events to the 3D points of objects. Whereafter, an event-based PnP algorithm is presented by~\cite{reverter2016event} to estimate object poses and enable continuous tracking. This algorithm is mathematically formulated as a least squares problem. In space applications, Jawaid \emph{et al.}~\cite{jawaid2022towards} propose an event-based satellite pose estimation method, using classic learning-based approaches to detect the satellite and predict the 2D positions of its landmarks. Subsequently, the corresponding 2D-3D points are fed into the PnP solver to determine the satellite poses. Zheng \emph{et al.}~\cite{zheng2022spike} propose a novel bio-inspired unsupervised learning framework to detect and track moving objects. The method utilizes an alternative type of neuromorphic vision sensors, namely spiking cameras~\cite{huang20221000}, to achieve favorable tracking outcomes.

Furthermore, event cameras have also been utilized for human motion estimation. Calabrese \emph{et al.}~\cite{calabrese2019dhp19} propose the first event-based 3D human motion estimation algorithm using multiple event cameras. Recently, Xu \emph{et al.}~\cite{xu2020eventcap} achieve the 3D capture of high-speed human motions with a single event camera. Their method captures human motions at millisecond resolution by combining model-based optimization and CNN-based human pose detection. Nonetheless, their method encounters considerable obstacles in addressing large-scale occlusions and topological changes.

Data-driven methods have experienced rapid development in the field of object pose estimation and tracking, particularly with the advancement of computational power in recent years.
However, these methods necessitate manual labeling of training data, which is time-consuming and does not promote diversity across various scenes and object categories~\cite{10433529}. Also, achieving real-time performance proves challenging for the majority of these works, even with the utilization of high-performance GPUs~\cite{8565885}.

In this paper, we seek to continuously recover 6-DoF that defines the object position and orientation relative to the camera or, equivalently, the pose of the camera relative to the object. Different from the methods mentioned above, we do not require a pose prior. Moreover, we fully exploit the characteristics of each event and combine them with robust estimation algorithms to efficiently reduce the impact of various interferences, such as outliers, sensor noise, and background clutter.

\section{Line-based Object Pose Estimation and Tracking}
\label{sec:method}
This section describes the details of our proposed method. First of all, the lines of an object are detected directly from the event cluster (Sec.~\ref{method0}), and employed for the initial pose estimation without known 2D-3D line correspondences (Sec.~\ref{method1}). After initialization, the event-line matching is adopted to continuously establish the correspondences between the events and object lines (Sec.~\ref{method2}). Finally, the object poses are refined and tracked using non-linear optimization combined with M-estimation, S-estimation, and MM-estimation (Sec.~\ref{method4}).
\begin{figure}[t] 
	\centering{\includegraphics[width=\linewidth]{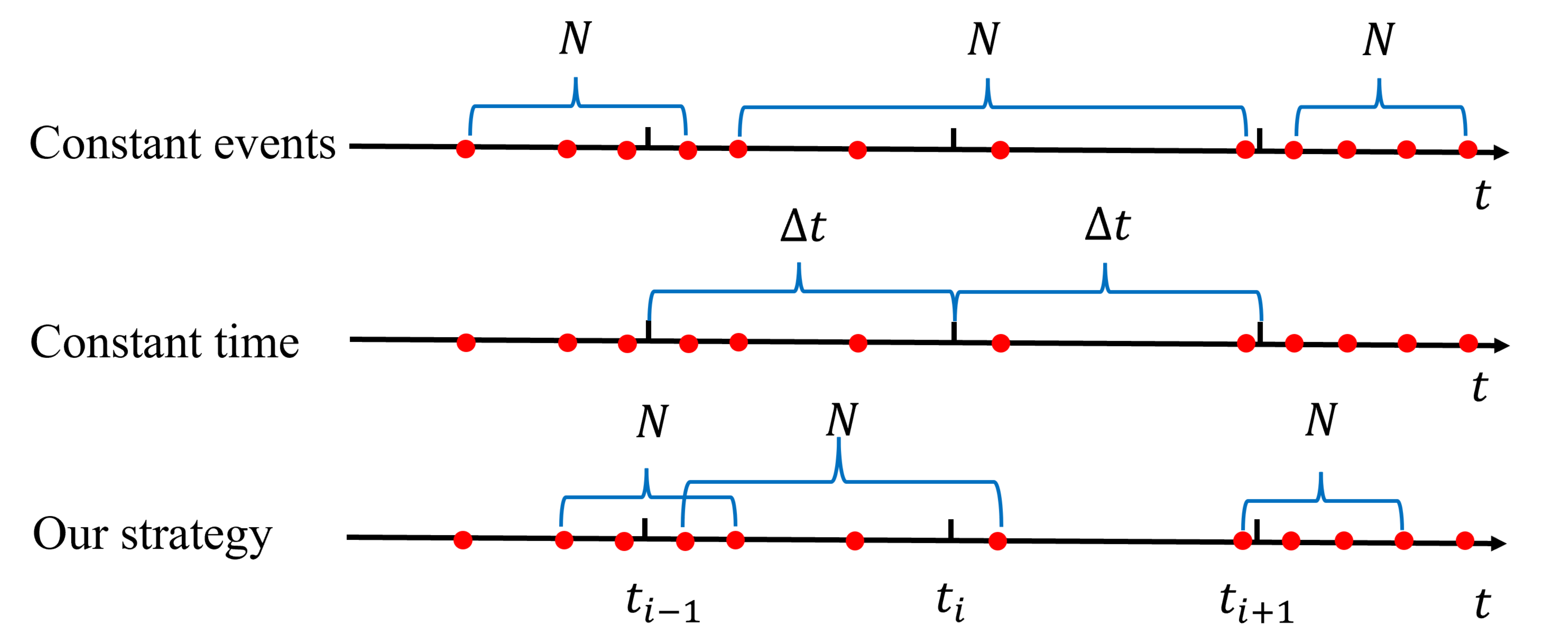}}
	\caption{Hybrid event clustering strategy. We use a spatio-temporal window of events, containing $N$ events closest to the time $t_i$. The parameters $N$ and $ \Delta  t $ are adaptively adjusted based on the number of events. Red dots correspond to events, and the bounds of the event cluster are marked in blue.}
	\label{fig2}
\end{figure}

\subsection{Event-based Line Detection}
\label{method0}
The event camera outputs asynchronous event streams, which fundamentally differ from conventional cameras. Each event ${{\mathbf{e}}_{i}}$ is defined by its image position, timestamp, and binary polarity. Events in a spatio-temporal window are usually processed together. Choosing the appropriate window size is critical for object pose estimation and tracking. Our approach aims to strike a balance between constant events and time, which enables us to preserve the temporal nature of events and accurately describe the pose state of an object at time $ t_i $. Inspired by~\cite{vidal2018ultimate} and~\cite{rebecq_real-time_2017}, we adopt a hybrid event clustering strategy that combines the two existing strategies: constant events $ N $ and constant time $ \Delta  t $. As depicted in Fig.~\ref{fig2}, a new spatio-temporal window of events is created by accumulating $ N $ events closest to the time $ t_i $. The rapid or slow movement of objects, as well as lighting changes, may lead to sharp fluctuations in the number of events. To tackle this issue, we devise a parameter-adaptive hybrid event clustering strategy to dynamically adjust $ N $ and $ \Delta  t $. 

If there are too few events within the spatio-temporal window from $ {t}_{i-1} $ to $ {t}_{i+1} $, the events within the next window $ {t}_{i+2} $ are also included, and then update ${t_i} = {{\left( {{t_{i - 1}} + {t_{i + 2}}} \right)} \mathord{\left/{\vphantom {{\left( {{t_{i - 1}} + {t_{i + 2}}} \right)} 2}} \right.\kern-\nulldelimiterspace} 2}$. Conversely, if there are too many events within this window, only the $ N $ events closest to $ t_i $ are retained, as they more precisely reflect the pose state of the object at time $ t_i $.

Events are triggered by object contours when there is relative motion between the object and the sensor. These contours can be modeled and parametrized as a group of lines. Lines have been extensively utilized in event-based visual perception, due to their higher detectability in event streams compared to other features, such as corners. Moreover, lines are more stable than points and less likely to be disturbed by noise and clutter~\cite{opromolla2017review}. Therefore, we leverage the object lines detected from events for initial pose estimation. A common method for event-based line detection is to accumulate events into 2D images and apply traditional line detection methods. However, this method compresses events directly into 2D images, without fully utilizing temporal information of events.

Here, we adopt an event-based line detection method by converting events to 3D point clouds in the space-time volume. On short time scales, lines that leave traces of events in the spatio-temporal dimension are nearly planar~\cite{peng_xin_continuous_2021}. Based on this assumption, we start with event clustering over a short time interval to obtain point clouds. Point cloud segmentation methods are mature and widely applied~\cite{lu_pairwise_2016}, inspiring us to divide the event cluster into multiple 3D planes. Subsequently, we perform plane fitting and extract the edge lines, which correspond to object lines at the beginning and end of the time interval.
\begin{figure}[t] 
	\centering{\includegraphics[width=8.5cm]{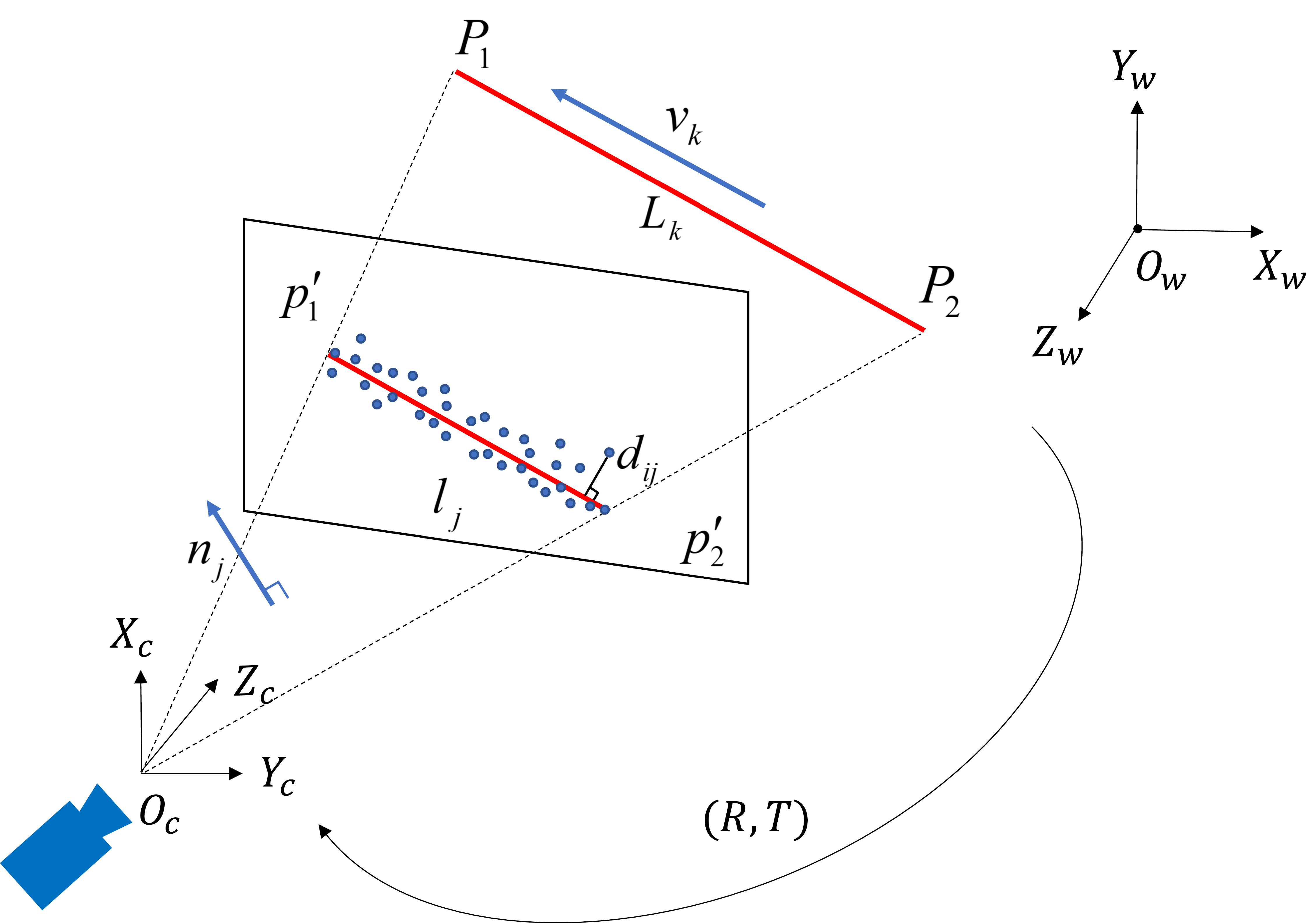}}
	\caption{The relative pose between the events and the projected lines. The 3D line is parameterized by its two endpoints $\mathbf P_{1}$ and $\mathbf P_{2}$. The projected line lies at the intersection of the plane $\mathbf P_{1} \mathbf{O_c} \mathbf P_{2}$ and the event plane.}
\label{fig3}
\end{figure}

\subsection{Initial Pose Estimation}
\label{method1}
Pose optimization is the non-convex optimization problem. Many methods initially calculate rotation matrices, thereby transforming the problem into a linear least-squares one, which can be effectively and reliably solved~\cite{7139836}. Liu \emph{et al.}~\cite{liu2020globally} employ the BnB algorithm to address the PnL problem. Inspired by this, we further endeavor to address the more complex pose estimation problem when 2D-3D line correspondences are unknown. To be specific, we utilize BnB to conduct a global search for the optimal rotation $\mathbf{R}\in SO(3)$, followed by employing a linear solution to calculate the translation vector $\mathbf{T}\in {{\mathbb{R}}^{3}}$.

For each 3D-2D line correspondence, such as $ \mathbf L_{k}$ and $ \mathbf l_{j}$, there exists a plane passing through both lines and the origin $ O_c $ of the camera coordinate system. The unit normal vector of this plane is represented by $\mathbf{n}_{j}$. Let ${\mathbf{v}}_{k}$ denote the unit direction vector of line $ \mathbf L_{k}$, and let ${\mathbf{P}}_{k}$ be an arbitrary point on this line, as depicted in Fig.~\ref{fig3}. Geometrically, there is a significant constraint~\cite{xu2016pose}:
\begin{equation}
\mathbf{n}_{j}^\top \mathbf{Rv}_{k}=0.
\label{eq1-1}
\end{equation}

In correspondence-free cases, we aim to find the optimal rotation $\mathbf{R}$ by maximizing the cardinality of the inlier set $E\left( \mathbf{R} \right)$, which can be formulated as: 
\begin{equation}
E\left( \mathbf{R} \right)=\sum\limits_{j}{\underset{k}{\mathop{\max }}\,\left\lfloor \left| \angle \left( {{\mathbf{n}}_{j}},\mathbf{R}{{\mathbf{v}}_{k}} \right)-\frac{\pi }{2} \right|\le \epsilon_{\min }  \right\rfloor },
\label{eq1-2}
\end{equation}
where $\epsilon_{\min }$ is the inlier threshold and $\left\lfloor \cdot  \right\rfloor $ is an indicator function that returns 1 if the inner condition is true and 0 otherwise. 

To solve this non-convex cardinality maximisation problem, we utilize the BnB algorithm to find a globally-optimal solution $\mathbf{R}$. Here, rotations are denoted as $\mathbf{r}\in {{\mathbb{R}}^{3}}$ in the form of axis-angle parameters, enabling the search space of rotations within a solid ball of radius $\pi$. We further employ a 3D cube ${{\left[ -\pi ,\pi  \right]}^{3}}$ that circumscribes the $\pi$-ball as the rotation domain. The rotation matrix represented by $\mathbf{r}$ can be obtained by Rodrigues’ formula.

The BnB algorithm subdivides the solution domain into smaller branches, and then calculates upper and lower bounds of the optimum within each of these branches, which gradually converge as the branch size tends to zero~\cite{Campbell_2017_ICCV}.

For each cube-shaped rotation branch, whose center is $\mathbf{r_0}$, we first derive the lower bound. Any value of the function within the rotation branch will naturally not surpass the maximum value~\cite{Campbell_2017_ICCV}, i.e.,
\begin{equation}
\underset{\mathbf{r}}{\mathop{\max }}\,E\left( {{\mathbf{R}}_{\mathbf{r}}} \right)\ge E\left( {{\mathbf{R}}_{{{\mathbf{r}}_{\mathbf{0}}}}} \right),
\label{eq1-4.2}
\end{equation}
where $\mathbf{R}_\mathbf{r_0}$ is the matrix form of rotation $\mathbf{r_0}$. Then, the lower bound $E_L$ can be determined as
\begin{equation}
{{E}_{L}}=E\left( {{\mathbf{R}}_{{{\mathbf{r}}_{\mathbf{0}}}}} \right).
\label{eq1-4}
\end{equation}

Further, we employ the following two Lemmas~\cite{hartley2009global} to derive the upper bound.
\begin{lemma}
\label{lemma1}
Let ${\mathbf{R}_\mathbf{a_1}}$, ${\mathbf{R}}_{\mathbf{a_2}}$ be rotation matrices, $\mathbf{a_1}$, $\mathbf{a_2}$ their corresponding axis-angle representations. For a vector $\mathbf{v}\in {{\mathbb{R}}^{3}}$:
\begin{equation}
\angle \left( {\mathbf{R}_\mathbf{a_1}} \mathbf{v},{{\mathbf{R}}_{\mathbf{a_2}}}\mathbf{v} \right)\le \left\| \mathbf{a_1}-\mathbf{a_2} \right\|
\label{eq1-5},
\end{equation}
where $\angle \left( \cdot ,\cdot  \right)$ denotes the angle between the two rotated vectors, which is less than or equal to the Euclidean distance between their rotations' angle-axis representations.

The proof of Lemma 1 can be found in~\cite{brown2019family}.
\end{lemma}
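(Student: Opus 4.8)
The plan is to prove Lemma~\ref{lemma1}, which bounds the angle between $\mathbf{R}_{\mathbf{a_1}}\mathbf{v}$ and $\mathbf{R}_{\mathbf{a_2}}\mathbf{v}$ by $\|\mathbf{a_1}-\mathbf{a_2}\|$ for any fixed $\mathbf{v}$. The key observation is that the angle between the two rotated vectors equals the rotation angle of the relative rotation $\mathbf{R}_{\mathbf{a_1}}\mathbf{R}_{\mathbf{a_2}}^{-1}$ applied to $\mathbf{R}_{\mathbf{a_2}}\mathbf{v}$, and this is at most the rotation angle $\theta$ of $\mathbf{R}_{\mathbf{a_1}}\mathbf{R}_{\mathbf{a_2}}^{-1}$ itself (since a rotation by $\theta$ about some axis moves any vector by at most $\theta$, with equality when the vector is orthogonal to the axis). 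So the proof reduces to showing that the rotation angle of $\mathbf{R}_{\mathbf{a_1}}\mathbf{R}_{\mathbf{a_2}}^{-1}$ is bounded by $\|\mathbf{a_1}-\mathbf{a_2}\|$, i.e., the geodesic distance on $SO(3)$ between two rotations is no larger than the Euclidean distance between their axis-angle vectors in $\mathbb{R}^3$.

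First I would set up notation: write $\theta_i = \|\mathbf{a_i}\|$ for the rotation angles and let $d_{SO(3)}(\mathbf{R}_{\mathbf{a_1}},\mathbf{R}_{\mathbf{a_2}})$ denote the angle of the relative rotation. The first step is the elementary geometric fact that for a rotation $\mathbf{R}$ by angle $\theta$ about a unit axis, and any unit vector $\mathbf{u}$, we have $\angle(\mathbf{u},\mathbf{R}\mathbf{u}) \le \theta$; this follows by decomposing $\mathbf{u}$ into components parallel and perpendicular to the axis and a direct computation of the inner product, or by noting the displacement is a rotation within the plane perpendicular to the axis scaled by $\sin$ of the perpendicular component's angle. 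Applying this with $\mathbf{R} = \mathbf{R}_{\mathbf{a_1}}\mathbf{R}_{\mathbf{a_2}}^{-1}$ and $\mathbf{u} = \mathbf{R}_{\mathbf{a_2}}\mathbf{v}/\|\mathbf{v}\|$ gives $\angle(\mathbf{R}_{\mathbf{a_1}}\mathbf{v},\mathbf{R}_{\mathbf{a_2}}\mathbf{v}) \le d_{SO(3)}(\mathbf{R}_{\mathbf{a_1}},\mathbf{R}_{\mathbf{a_2}})$.

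The second and harder step is the inequality $d_{SO(3)}(\mathbf{R}_{\mathbf{a_1}},\mathbf{R}_{\mathbf{a_2}}) \le \|\mathbf{a_1}-\mathbf{a_2}\|$. I would argue this via the path length: the straight segment $t \mapsto \mathbf{a}(t) = (1-t)\mathbf{a_1} + t\mathbf{a_2}$ in $\mathbb{R}^3$ maps to a path $t\mapsto \mathbf{R}_{\mathbf{a}(t)}$ in $SO(3)$ whose Riemannian length (with the bi-invariant metric normalized so that geodesic distance equals rotation angle) is at most the Euclidean length $\|\mathbf{a_1}-\mathbf{a_2}\|$ of the segment, because the exponential map $\exp:\mathbb{R}^3 \cong \mathfrak{so}(3) \to SO(3)$ is distance-non-increasing (its derivative has operator norm $\le 1$ everywhere, a standard fact about the exponential map of a compact Lie group with bi-invariant metric). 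Since geodesic distance is bounded by the length of any connecting path, the claim follows. Alternatively, one can cite the explicit computation in the referenced literature; I would mention both routes and then combine the two steps to conclude.

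The main obstacle I anticipate is making the second step rigorous without heavy Lie-theoretic machinery: the statement that $\exp$ on $SO(3)$ is $1$-Lipschitz from $(\mathbb{R}^3,\|\cdot\|)$ to $(SO(3),d_{SO(3)})$ is true but its clean proof uses the bi-invariant metric and properties of the differential of $\exp$ (or an explicit bound on $\|I - \frac{\sin\theta}{\theta}[\,\cdot\,]_\times + \cdots\|$). A careful write-up must either invoke this as a known result (with the citation already given, \cite{hartley2009global} and \cite{brown2019family}) or supply the coordinate computation; given that the paper defers the proof to \cite{brown2019family} for the full details, I would present the conceptual reduction (angle $\le$ relative rotation angle $\le$ axis-angle distance) and point to the reference for the remaining metric inequality, which keeps the argument self-contained at the level of rigor appropriate for this paper.
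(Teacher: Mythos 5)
Your proposal is correct and follows essentially the same route as the cited literature: the paper itself gives no proof of Lemma~1, deferring entirely to the reference, and the standard argument there is exactly your two-step reduction (the angle between the rotated copies of $\mathbf{v}$ is at most the rotation angle of the relative rotation $\mathbf{R}_{\mathbf{a_1}}\mathbf{R}_{\mathbf{a_2}}^{-1}$, and that angle --- the geodesic distance on $SO(3)$ --- is bounded by $\left\| \mathbf{a_1}-\mathbf{a_2} \right\|$ via the $1$-Lipschitz property of the exponential map). Both steps are sound as you state them, so no further comment is needed.
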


\begin{lemma}
\label{lemma2}
Given a rotation $\mathbf{R}_\mathbf{r}$ within the cube-shaped rotation branch of side-length $\delta_{r}$, for any point $\mathbf{p}$:
\begin{equation}
\begin{aligned}
	\angle\left(\mathbf{R}_{\mathbf{r}} \mathbf{p}, \mathbf{R}_{\mathbf{r}_{0}} \mathbf{p}\right) 
	\leqslant \min \left( {\sqrt{3}{{\delta }_{r}}}/{2}\;,\pi  \right).
\end{aligned}
\label{eq1-6}
\end{equation}
\end{lemma}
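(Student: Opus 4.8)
The plan is to derive Lemma~\ref{lemma2} as an immediate corollary of Lemma~\ref{lemma1} together with the elementary geometry of the cube-shaped rotation branch, and then to sharpen the result with the trivial observation that the angle between any two vectors in $\mathbb{R}^3$ never exceeds $\pi$.

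First I would apply Lemma~\ref{lemma1} with $\mathbf{a_1}=\mathbf{r}$, $\mathbf{a_2}=\mathbf{r}_0$ and $\mathbf{v}=\mathbf{p}$, which directly gives $\angle\!\left(\mathbf{R}_{\mathbf{r}}\mathbf{p},\mathbf{R}_{\mathbf{r}_0}\mathbf{p}\right)\le\left\|\mathbf{r}-\mathbf{r}_0\right\|$, a bound that is already uniform in $\mathbf{p}$. The only remaining work is to bound $\left\|\mathbf{r}-\mathbf{r}_0\right\|$: since $\mathbf{r}_0$ is the centre of a cube of side-length $\delta_r$ and $\mathbf{r}$ lies inside that cube, each of the three components of $\mathbf{r}-\mathbf{r}_0$ has magnitude at most $\delta_r/2$, hence $\left\|\mathbf{r}-\mathbf{r}_0\right\|\le\sqrt{3(\delta_r/2)^2}=\sqrt{3}\,\delta_r/2$, i.e.\ half of the cube's space diagonal, attained when $\mathbf{r}$ sits at a corner. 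Separately, because $\angle(\cdot,\cdot)$ is by definition taken in $[0,\pi]$, we also have $\angle\!\left(\mathbf{R}_{\mathbf{r}}\mathbf{p},\mathbf{R}_{\mathbf{r}_0}\mathbf{p}\right)\le\pi$ unconditionally. Taking the smaller of the two bounds yields $\angle\!\left(\mathbf{R}_{\mathbf{r}}\mathbf{p},\mathbf{R}_{\mathbf{r}_0}\mathbf{p}\right)\le\min\!\left(\sqrt{3}\,\delta_r/2,\pi\right)$, which is the claim.

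I do not expect a genuine obstacle here: the statement is a one-line consequence of Lemma~\ref{lemma1} once the cube geometry is written out, and Lemma~\ref{lemma1} itself is quoted rather than reproved. The only point worth a sentence of justification is why the cap at $\pi$ is included at all --- for a coarse branch the quantity $\sqrt{3}\,\delta_r/2$ may exceed $\pi$, in which case it is useless as an angular deviation bound, so the $\min$ with $\pi$ keeps the inequality simultaneously valid and informative. This per-branch angular radius is precisely what later enters the upper-bound computation in the BnB search, where it acts as the allowable inflation of the inlier threshold $\epsilon_{\min}$ in Eq.~\eqref{eq1-2}.
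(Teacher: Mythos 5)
Your proposal is correct and follows essentially the same route as the paper: apply Lemma~\ref{lemma1} with $\mathbf{v}=\mathbf{p}$, bound $\left\|\mathbf{r}-\mathbf{r}_0\right\|$ by half the cube's space diagonal $\sqrt{3}\,\delta_r/2$, and cap the result at $\pi$ since angles between vectors never exceed $\pi$. Your componentwise justification of the diagonal bound is slightly more explicit than the paper's one-line remark, but the argument is identical in substance.
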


\begin{proof}
According to Lemma~\ref{lemma1}, inequality can be derived as:
\begin{equation}
\begin{aligned}
	\angle\left(\mathbf{R}_{\mathbf{r}} \mathbf{p}, \mathbf{R}_{\mathbf{r}_{0}} \mathbf{p}\right) & \leqslant \min \left(\left\|\mathbf{r}-\mathbf{r}_{0}\right\|, \pi\right)\\&
	\leqslant \min \left( {\sqrt{3}{{\delta }_{r}}}/{2}\;,\pi  \right),
\end{aligned}
\label{eq1-7}
\end{equation}
where Eq. (\ref{eq1-7}) follows that the maximum value of $\left\| \mathbf{r}-{{\mathbf{r}}_{0}} \right\|$ amounts to ${\sqrt{3}{{\delta }_{r}}}/{2}\;$, corresponding to the space diagonal of the rotation cube.
\end{proof}

Let $\mu \triangleq \min \left\{ {\sqrt{3} {\delta }_{r} }/{2,\pi}\;\right\}$. According to triangle inequality in spherical geometry~\cite{liu2020globally},
\begin{equation}
\angle \left( {{\mathbf{n}}_{j}},{{\mathbf{R}}_{\mathbf{r}}}{{\mathbf{v}}_{k}} \right)-\text{ }\angle \left( {{\mathbf{n}}_{j}},{{\mathbf{R}}_{{{\mathbf{r}}_{\mathbf{0}}}}}{{\mathbf{v}}_{k}} \right)\le \angle \left( {{\mathbf{R}}_{\mathbf{r}}}{{\mathbf{v}}_{k}},{{\mathbf{R}}_{{{\mathbf{r}}_{\mathbf{0}}}}}{{\mathbf{v}}_{k}} \right)\le \mu ,
\label{eq1-9}
\end{equation}
\begin{equation}
\angle \left( {{\mathbf{n}}_{j}},{{\mathbf{R}}_{\mathbf{r}}}{{\mathbf{v}}_{k}} \right)-\angle \left( {{\mathbf{n}}_{j}},{{\mathbf{R}}_{{{\mathbf{r}}_{\mathbf{0}}}}}{{\mathbf{v}}_{k}} \right)\ge -\angle \left( {{\mathbf{R}}_{\mathbf{r}}}{{\mathbf{v}}_{k}},{{\mathbf{R}}_{{{\mathbf{r}}_{\mathbf{0}}}}}{{\mathbf{v}}_{k}} \right)\ge -\mu ,
\label{eq1-10}
\end{equation}
therefore,
\begin{equation}
\left| \angle \left( {{\mathbf{n}}_{j}},{{\mathbf{R}}_{{{\mathbf{r}}_{\mathbf{0}}}}}{{\mathbf{v}}_{k}} \right)-\angle \left( {{\mathbf{n}}_{j}},{{\mathbf{R}}_{\mathbf{r}}}{{\mathbf{v}}_{k}} \right) \right|\le \mu. 
\label{eq1-11}
\end{equation}

Then, we can derive the following formula
\begin{align}
& \left| \angle \left( {{\mathbf{n}}_{j}},\mathbf{R}_\mathbf{r}{{\mathbf{v}}_{k}} \right)-\frac{\pi }{2} \right| \\ 
& =\left| \angle \left( {{\mathbf{n}}_{j}},{{\mathbf{R}}_{{{\mathbf{r}}_{\mathbf{0}}}}}{{\mathbf{v}}_{k}} \right)-\frac{\pi }{2}-\left( \angle \left( {{\mathbf{n}}_{j}},{{\mathbf{R}}_{{{\mathbf{r}}_{\mathbf{0}}}}}{{\mathbf{v}}_{k}} \right)-\angle \left( {{\mathbf{n}}_{j}},\mathbf{R}_\mathbf{r}{{\mathbf{v}}_{k}} \right) \right) \right| \\ 
& \ge \left| \angle \left( {{\mathbf{n}}_{j}},{{\mathbf{R}}_{{{\mathbf{r}}_{\mathbf{0}}}}}{{\mathbf{v}}_{k}} \right)-\frac{\pi }{2} \right|-\left| \angle \left( {{\mathbf{n}}_{j}},{{\mathbf{R}}_{{{\mathbf{r}}_{\mathbf{0}}}}}{{\mathbf{v}}_{k}} \right)-\angle \left( {{\mathbf{n}}_{j}},\mathbf{R}_\mathbf{r}{{\mathbf{v}}_{k}} \right) \right| \\ 
& \ge \left| \angle \left( {{\mathbf{n}}_{j}},{{\mathbf{R}}_{{{\mathbf{r}}_{\mathbf{0}}}}}{{\mathbf{v}}_{k}} \right)-\frac{\pi }{2} \right|-\mu.
\end{align}

It follows that 
\begin{align}
& \left \lfloor \left| \angle \left( {{\mathbf{n}}_{j}},{{\mathbf{R}}_{\mathbf{r}}}{{\mathbf{v}}_{k}} \right)-\frac{\pi }{2} \right|\le {{\epsilon }_{\min }} \right \rfloor\\ 
& \le \left\lfloor \left| \angle \left( {{\mathbf{n}}_{j}},{{\mathbf{R}}_{{{\mathbf{r}}_{\mathbf{0}}}}}{{\mathbf{v}}_{k}} \right)-\frac{\pi }{2} \right|-\mu \le {{\epsilon }_{\min }} \right\rfloor \\ 
& =\left\lfloor \left| \angle \left( {{\mathbf{n}}_{j}},{{\mathbf{R}}_{{{\mathbf{r}}_{\mathbf{0}}}}}{{\mathbf{v}}_{k}} \right)-\frac{\pi }{2} \right|\le \left( \mu +{{\epsilon }_{\min }} \right) \right\rfloor.
\end{align}

Therefore, the upper bound $E_U$ of the objective function Eq. (\ref{eq1-2}) can be derived as:
\begin{equation}
{{E}_{U}}=\sum\limits_{j}{\underset{k}{\mathop{\max }}\,\left\lfloor \left| \angle \left( {{\mathbf{n}}_{j}},{{\mathbf{R}}_{{{\mathbf{r}}_{\mathbf{0}}}}}{{\mathbf{v}}_{k}} \right)-\frac{\pi }{2} \right|\le \left( \epsilon_{\min } +\mu  \right) \right\rfloor }.
\label{eq1-13}
\end{equation}

By comparing the upper and lower bounds, we can eliminate old branches and refine new branches until convergence. The convergence criterion of the BnB algorithm is met when the upper bound equals the lower bound. The optimal rotation has been obtained, and simultaneously, the correspondences of the 2D-3D lines have also been determined.

The next step is to compute the translation vector. Theoretically, the translation vector can also be obtained using BnB. However, solving for rotation and translation sequentially requires two independent BnB processes, making it challenging to guarantee that the optimal rotation and translation are globally optimal solutions~\cite{liu2020globally}. Therefore, we determine the translation vector by applying another geometrical constraint~\cite{xu2016pose}:
\begin{equation}
\mathbf{n}_{j}^\top\left( \mathbf{R}{{\mathbf{P}}_{k}}+\mathbf{T} \right)=0.
\label{eq1-14}
\end{equation}

This constraint denotes that following the transformation of ${\mathbf{P}}_{k}$ into the camera coordinate system, it should be perpendicular to $\mathbf{n}_{j}$. After determining the rotation and correspondences, the translation vector $\mathbf{T}$ can be linearly computed.

\subsection{Event-line Matching}
\label{method2}
The data association between events and lines is a critical part of our method, as it enables us to establish correspondences between 2D events and 3D models, while also effectively eliminating outliers. Each event is assigned to each candidate line based on their distances. Events that are too far from the candidate line may be generated by background or sensor noise, which should be removed. The choice of these distance thresholds is a tradeoff between latency and accuracy. Using more events may lead to more accurate pose estimation results, while higher latency would be introduced. Hence, these thresholds need to be dynamically adjusted based on the motion state of the object.

We have initially taken into account the rotational motion of the object. The relative rotation ${\Delta \mathbf{R}}$ from time ${{t}_{i}}$ to ${t}_{i+1}$ can be parameterized as a continuous time function by a local angular velocity $w$:
\begin{equation}
\begin{aligned}
{\Delta \mathbf R}&=\exp \left(\widehat{w}\left(t_{i+1}-t_{i}\right)\right)\\
&=\cos (\theta) \mathbf{I}+(1-\cos (\theta)) \mathbf{b} \mathbf{b}^{\top}+\sin (\theta) \widehat{\mathbf{b}},
\end{aligned}
\label{eqd-1}
\end{equation}
where $\left( \mathbf{b},\theta  \right)$ are the axis-angle parameters of the relative rotation ${\Delta \mathbf{R}}$, and $\widehat{\cdot }$ is the cross-product matrix. 

Further, we take account of the rotational and translational motion of the object at the same time. On short time scales (e.g., in the millisecond range), it is reasonable to assume that the motion of an object is smooth with constant angular velocity $w $ and linear velocity $\nu $~\cite{peng_xin_continuous_2021}. The relative translation ${\Delta \mathbf{{T}}}$ from time ${{t}_{i}}$ to ${t}_{i+1}$ is given by:
\begin{equation}
\begin{aligned}
{\Delta \mathbf T} &=\mathbf{J} \nu \left(t_{i+1}-t_{i}\right), \\
\text { where } \mathbf{J} &=\frac{\sin (\theta)}{\theta} \mathbf{I}+\left(1-\frac{\sin (\theta)}{\theta}\right) \mathbf{b b}^{\top}+\frac{1-\cos \theta}{\theta} \widehat{\mathbf{b}}.
\end{aligned}
\label{eqd-2}
\end{equation}

Events are triggered sequentially as the camera or object moves. Within a short period of time, event streams generated by moving objects are smooth and continuous. Once the object pose of an event cluster is determined, the initial pose of the adjacent event cluster at the next time can be inferred using the constant-velocity motion model. 
Given the pose ${{\mathbf{R}}_{i}}$, ${{\mathbf{T}}_{i}}$ of the object at time ${{t}_{i}}$, along with constant angular velocity $w$ and linear velocity $\nu $, the initial pose of the next event cluster at time ${{t}_{i+1}}$ can be obtained:
\begin{equation}
\mathbf{R}_{i+1}^{0}={{\mathbf{R}}_{i}}\Delta {{\mathbf{R}}^{-1}}, 
\label{eqd-3}
\end{equation}
\begin{equation}
\mathbf{T}_{i+1}^{0}={{\mathbf{T}}_{i}}-{{\mathbf{R}}_{i}}\Delta {{\mathbf{R}}^{-1}}\Delta \mathbf{T}. 
\label{eqd-4}
\end{equation}

\begin{figure}[t] 
\centering{\includegraphics[width=8cm]{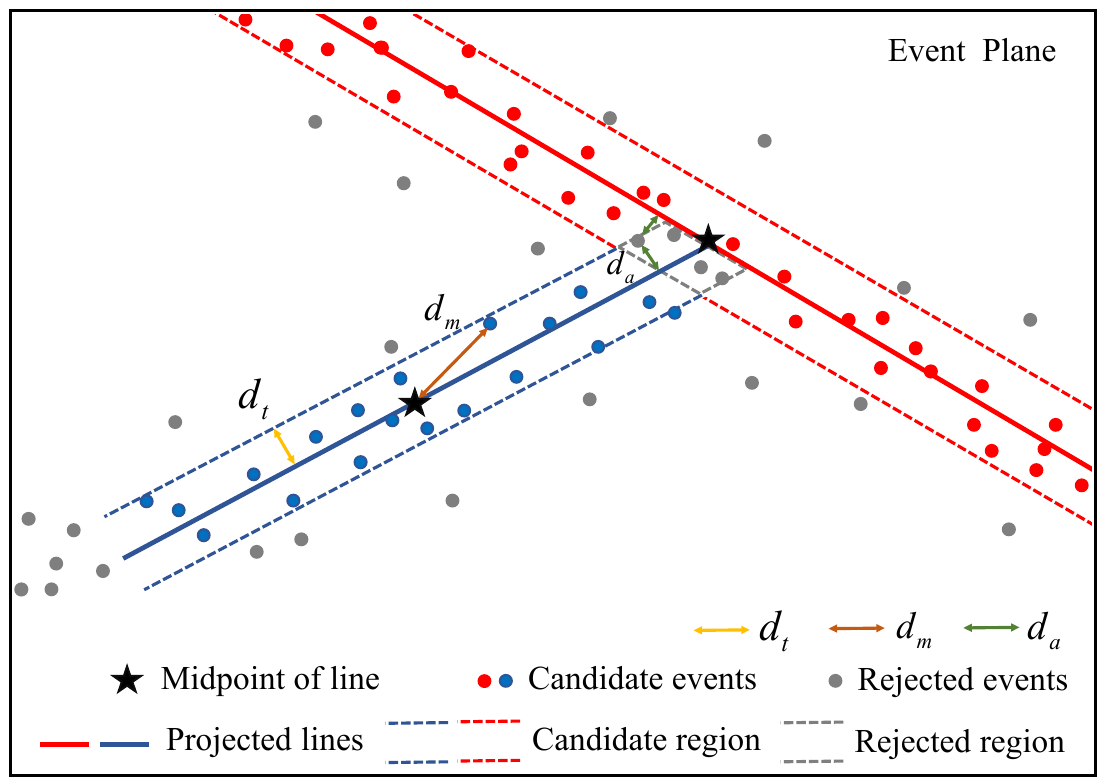}}
\caption{Illustration of the event-line matching strategy. Events are selected or rejected based on the threshold $d_a$, $d_t$ and $ d_m$.}
\label{fig4}
\end{figure}

Combining the constant-velocity motion model, we proceed to establish the association between events and lines. Theoretically, events are triggered by the nearest line. However, noise may also be present in practical situations. We propose an event-line matching strategy that fully utilizes the geometric distance between events and lines, thereby reducing the interference of noise. Firstly, a candidate region is established in near the line. An event becomes a candidate event when its distance to the nearest line is less than $d_t$, and its distance to the midpoint of that line is less than $d_m$. This establishes a rectangular candidate region, ensuring that candidate events fall within the scope of the line. If an event is in close proximity to two lines, it is not possible to directly ascertain its correspondence to either line. To address this situation, we construct a rejected region. If an event is in close proximity to the two nearest lines, each with a distance less than $d_a$, the event is rejected from being associated with any line. The event-line matching strategy is illustrated in Fig.~\ref{fig4}. The distance thresholds $d_t$, $d_m$, $d_a$ can be empirically determined.

The object lines that move outside the field of view (FoV) of the camera or are situated on the backside of the model are considered invisible and, consequently, ought to be discarded. The remaining events will only be assigned to visible lines. Given the 3D model and pose of objects, the visibility issue of the lines can be transformed into assessing the visibility of their endpoints.

Firstly, assess whether the endpoints lie within the FoV of the camera, by projecting the endpoints onto the event plane using the object pose. Subsequently, establish a ray by connecting the optical center with the endpoints. If the ray intersects with multiple planes of the object, visibility is determined by comparing the distances between these intersection points and the optical center of the camera. Conversely, if there is only a single intersection point, specifically referring to the endpoint itself, then the endpoint is considered visible. If both endpoints are visible, the line is considered to be visible. Otherwise, the line is deemed to be invisible. In the case where only one endpoint is visible, uniform sampling of the line is conducted. The aforementioned steps are then repeated to determine the visibility of the line segments. Furthermore, a more lightweight alternative solution is available by ignoring those lines that correspond to only a few events through event-line matching. Invisible lines will be removed to improve the search speed and accuracy of the event-line matching.

\subsection{Event-based Pose Optimization and Tracking}
\label{method4}
With the initial pose, the 3D lines of objects can be projected onto the event plane to obtain 2D lines. Let $\mathbf P_{1}$ and $ \mathbf P_{2} \in \mathbb{R}^3$ be the endpoints of the line $ \mathbf L_{k}$, $\mathbf p_{1}$ and $ \mathbf p_{2} \in \mathbb{R}^2$ be the endpoints of the 2D projected line $\mathbf l_{j}$ on the event plane, and $\mathbf {p}_{\mathrm{1}}^{\mathrm{h}}$ and $\mathbf{p}_{\mathrm{2}}^{\mathrm{h}}$ are their homogeneous coordinates, as shown in Fig.~\ref{fig3}. The line coefficients of $\mathbf l_{j}$ are:
\begin{equation}
\mathbf l_{j}=\frac{\mathbf{p}_{\mathrm{1}}^{\mathrm{h}} \times \mathbf{p}_{\mathrm{2}}^{\mathrm{h}}}{\left|\mathbf{p}_{\mathrm{1}}^{\mathrm{h}} \times \mathbf{p}_{\mathrm{2}}^{\mathrm{h}}\right|}=[l_{jx},l_{jy},l_{jz}]^\top.
\label{eq2}
\end{equation}

After performing event-line matching, an event $\mathbf e_{i} $ within the event cluster corresponds to a unique projected line $\mathbf l_{j}$, and the event-line distance can be obtained:
\begin{equation}
{{d}_{ij}}=\frac{\mathbf e_{i}^{T}{\mathbf{l}_{j}}}{\sqrt{{{l_{jx}}^{2}}+{{l_{jy}}^{2}}}}.
\label{eq3}
\end{equation}

The object pose can be continuously tracked by minimizing event-line distances, which is defined as the sum of event-line distances ${d}_{ij}$ between the events and the projected lines. Note that in practice, just like standard vision sensors, event cameras also have fixed pattern noise (FPN)~\cite{gallego_event-based_2022}. Hence, event streams contain a lot of noise and outliers. To attenuate the leverage of influential noise and outliers, a robust pose tracking method is proposed by assigning different weights ${\omega }_{ij}$ to each event. The weights are determined based on event-line distances, i.e., events with smaller residuals are given larger weights, while events with larger residuals are assigned smaller weights. The assigned weights are based on the Our-MM method as illustrated in Fig.~\ref{fig5}, which will be discussed further in the subsequent sections.
\begin{figure}[t]
\centering
\subfloat[Distance]{
\begin{minipage}[t]{0.49\linewidth}
	\includegraphics[width=4.1cm,height=3.7cm]{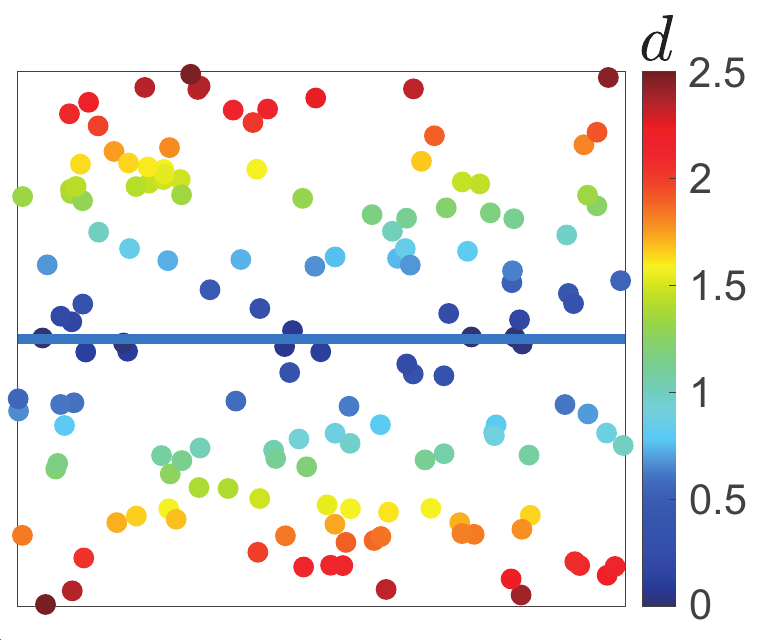}
	\label{fig5.1}
\end{minipage}
}
\subfloat[Weight]{
\begin{minipage}[t]{0.49\linewidth}
	\includegraphics[width=4.1cm,height=3.7cm]{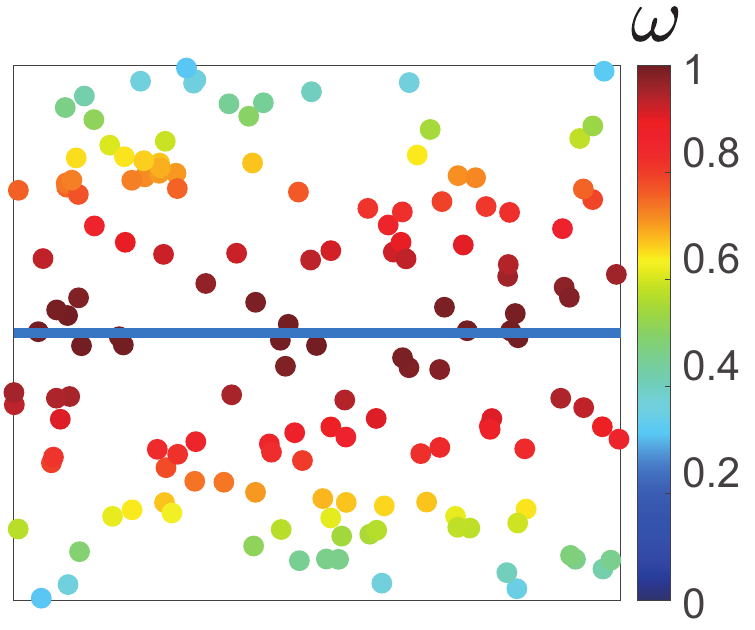}
	\label{fig5.2}
\end{minipage}
}
\caption{Weights are assigned to events based on event-line distances. (a) The colorbar represents the distance $ d $ from the event to the blue line. (b) The colorbar indicates the assigned weight $ \omega $ of events.}
\label{fig5}
\end{figure}

The pose parameters $\mathbf{X}=\left\{ \mathbf{R},\mathbf{T} \right\}$ can be obtained by ${\mathbf{X}^{*}}=\arg {{\min }_{\mathbf{X}}}C\left( \mathbf{X} \right)$, using non-linear optimization with the cost function:
\begin{equation}
C\left( \mathbf{X} \right)=\sum\limits_{i,j}{{{\omega }_{ij}}\cdot \left( d_{ij}^{2} \right)}.
\label{eq4}
\end{equation}

Solving the minimization (\ref{eq4}) is complicated because the objective function is highly non-convex in the residual errors $d_{ij}$. Theoretically, the robust estimation regains robustness by substituting the quadratic cost in the least squares problem with a robust cost function $\rho \left( \cdot  \right)$:  
\begin{equation}
\sum\limits_{l}{\rho \left( {{r}_{l}}\left( {{x}_{l}},\vartheta  \right);\sigma \right)},
\label{eqm1}
\end{equation}
where $\vartheta $ is the parameter to be estimated, $\sigma$ is a scale estimator. ${x}_{l}$ is the $l$-th measurement, and ${{r}_{l}}\left( {{x}_{l}},\vartheta \right)$ is the residual error of $l$-th measurement. According to the minimization principle, we look for partial derivatives w.r.t. $\vartheta$ and set the derivative to zero to find the optimal parameter:

\begin{equation}
\sum\limits_{l}{\rho \left( {{r}_{l}}\left( {{x}_{l}},\vartheta  \right);\sigma  \right)}\frac{\partial \rho }{\partial \vartheta }=0.
\label{eqm2}
\end{equation}

\begin{assumption}
\label{Assumption1}
The cost function $\rho \left( \cdot  \right)$ satisfies the following assumptions~\cite{yohai1987high}:

(i) $\rho$ is real, symmetric, twice continuously differentiable.

(ii) $\rho (0)=0$, $\rho$ is strictly increasing on $\left[ 0,c  \right]$, where $c$ is a finite constant.
\end{assumption}

Here, we introduce three robust estimation methods to enhance robustness against noise and outliers and further optimize the object pose: M estimation, S estimation and MM estimation. 

{\bf M estimation.}
M estimation is a classical robust estimation of the maximum likelihood type~\cite{huber1973robust}. The least-squares cost is replaced by the function $\rho \left( \cdot  \right)$, which is less sensitive to noise with large residuals. 

For event-based pose optimization problems, we aim to minimize the residual function:
\begin{equation}
\underset{\mathbf{R},\mathbf{T}}{\mathop{\min }}\,\sum\limits_{i,j}{\rho \left( \frac{{{d}_{ij}}}{{{{\hat{\sigma }}}_{M}}} \right)},
\label{eqm4}
\end{equation}
where 
\begin{equation}
{{{{\hat{\sigma }}}_{M}}}=\frac{M A D}{0.6745}=\frac{\operatorname{median}\left|d_{ij}-\operatorname{median}\left(d_{ij}\right)\right|}{0.6745}.
\label{eqm5}
\end{equation}

For example, $\rho \left( \cdot  \right)$ can be a Huber, Cauchy, or Geman-McClure cost~\cite{black1996unification}. Here, we use Tukey’s bisquare objective function. 
Let ${{u}_{ij}}={{{d}_{ij}}}/{{{{\hat{\sigma }}}_{M}}}$, and we have:
\begin{equation}
\rho \left( {{u}_{ij}} \right)\!=\!\left\{ \begin{array}{*{35}{l}}
\frac{u_{ij}^{2}}{2}-\frac{u_{ij}^{4}}{2{{c}^{2}}}+\frac{u_{ij}^{6}}{6{{c}^{4}}}& ,\left| {{u}_{ij}} \right|\le c  \\
\frac{{{c}^{2}}}{6} & ,\left| {{u}_{ij}} \right|>c  \\
\end{array} .\right.
\label{eq8}
\end{equation}

The closed-form solution for the weight function is:
\begin{equation}
\omega _{ij}=\left\{ \begin{array}{*{35}{l}}
{{\left[ 1-{{\left( \frac{{{u}_{ij}}}{c} \right)}^{2}} \right]}^{2}} & ,\left| {{u}_{ij}} \right|\le c  \\
0 & ,\left| {{u}_{ij}} \right|>{c}  \\
\end{array} ,\right.
\label{eqm6}
\end{equation}
where $c=4.685$ when employing Tukey’s bisquare weight function. 

{\bf S estimation.}
S estimation is based on the residual scale of M-estimator~\cite{rousseeuw1984robust}. However, M estimation does not fully consider the data distribution and uses the median as the weight value. To address the limitations of using the median, S estimation uses the residual standard deviation to determine the minimum robust scale estimator $\hat{\sigma }_{S} $, which satisfies the following condition:
\begin{equation}
\underset{\mathbf{R},\mathbf{T}}{\mathop{\min }}\,\sum\limits_{i,j}{\rho \left( \frac{{{d}_{ij}}}{{{{\hat{\sigma }}}_{S}}} \right)}.
\label{eqs1}
\end{equation}

First, the scale estimator $\hat{\sigma }_{S} $ is initialized according to Eq. (\ref{eqm5}), just in the same way than $\hat{\sigma }_{M} $. By setting the derivative Eq. (\ref{eqm2}) to zero, we derive the influence function $\psi$ :
\begin{equation}
\psi\left(u_{ij}\right)=\rho^{\prime}\left(u_{ij}\right)= \begin{cases}u_{ij}\left[1-\left(\frac{u_{ij}}{c}\right)^2\right]^2 & ,\left|u_{ij}\right| \leq c \\ 0& ,\left|u_{ij}\right|>{c} \end{cases}.
\label{eqs2}
\end{equation}

The weight $\omega _{ij}^{0}$ is initialized similarly to Eq. (\ref{eqm6}) with $c=1.547$~\cite{susanti2014m}. $\omega _{ij}^{0}$ is used as an initial guess for the subsequent iteration, to avoid being trapped in local minima.
After the initialization, the scale estimator $\hat{\sigma }_{S} $ and $ \omega_{ij} $ are updated at each iteration:
\begin{equation}
{{{\hat{\sigma }}}_{S}}=\sqrt{\frac{1}{0.199mn}\sum\limits_{j=1}^{m}{\sum\limits_{i=1}^{n}{{{\omega }_{ij}}}\cdot d_{ij}^{2}}},
\label{eqs4}
\end{equation}
\begin{equation}
{\omega _{ij}}=\frac{\rho ({{u}_{ij}})}{u_{ij}^{2}}.
\label{eqs5}
\end{equation}

\begin{figure}[t]
\centering
\removelatexerror
\begin{algorithm}[H]
	\caption{The robust pose optimization combined with M-estimation, S-estimation, and MM-estimation}
	\LinesNumbered
	\KwIn{Events $\left\{ {{e}_{i}} \right\}_{i=1}^{N}$;Intrinsic matrix $ \mathbf K $;\\ \qquad \quad Initial pose $\mathbf X^{(0)}$; Object 3D lines ${\mathbf {L}_{j}}$; \\ \qquad \quad MaxIterations; Threshold.}
	\KwOut{Optimal pose {$\mathbf X$}.}
	$d_{ij}^{(0)}=VariableInitial\left( {{e}_{i}},\mathbf K,{\mathbf {X}^{(0)}},{\mathbf {L}_{j}} \right)$;
	
	${{{\hat{\sigma }}}^{(0)}}=VariableInitial\left( d_{ij}^{(0)} \right)$;
	
	$u_{ij}^{(0)}=d_{ij}^{(0)}/{{\hat{\sigma }}^{(0)}}$ ;
	
	$\omega _{ij}^{(0)}=VariableInitial\left( u_{ij}^{(0)} \right)$ ;
	
	\For{$k=1,...,$ \rm{MaxIterations}}{
		$d_{ij}^{(k)}=VariableUpdate\left( {{e}_{i}},\mathbf K,{\mathbf {X}^{(k-1)}},{\mathbf{L}_{j}} \right)$;
		
		${{\hat{\sigma }}^{(k)}}=VariableUpdate\left( \omega _{ij}^{(k-1)},d_{ij}^{(k)} \right)$ ;
		
		${u_{ij}^{(k)}}=d_{ij}^{(k)}/{{\hat{\sigma }}^{(k)}}$ ;
		
		$\omega _{ij}^{(k)}=WeightUpdate\left( u_{ij}^{(k)} \right)$ ;
		
		$C\left( {\mathbf{X}^{\left( k-1 \right)}} \right)=\sum\limits_{i,j}{\omega _{ij}^{(k)}\cdot {{\left( d_{ij}^{(k)} \right)}^{2}}}$ ;
		
		\eIf{$\left\| \nabla C\left( {\mathbf{X}^{\left( k-1 \right)}} \right) \right\|<$ \rm{Threshold}}{
			break;
		}{
			${\mathbf{X}^{(k)}}={{\min }} C\left( {\mathbf{X}^{\left( k-1 \right)}} \right)$;
		}
	}
	\Return $\left(\mathbf {X} \right)$
\end{algorithm}
\end{figure}

{\bf MM estimation.}
MM estimation is a special M estimator~\cite{yohai1987high}, which first estimates the regression parameter using S estimation, and then performs M estimation using these initial estimates. The regression MM estimator is particularly effective when dealing with errors with a normal distribution. Let ${{\rho }_{1}}$ be another function that satisfies Assumption 1, and ${{\psi }_{1}}={{{\rho }'}_{1}}$. Then the MM estimator is the solution of 
\begin{equation}
\sum\limits_{i,j}{{{\psi }_{1}}\left( \frac{{{d}_{ij}}}{{{{\hat{\sigma }}}_{MM}}} \right){{\mathbf{e}}_{i}}=}\sum\limits_{i,j}{{{{{\rho }'}}_{1}}\left( \frac{{{d}_{ij}}}{{{{\hat{\sigma }}}_{MM}}} \right){{\mathbf{e}}_{i}}=0},
\label{eqmm1}
\end{equation}
where ${{{\hat{\sigma }}}_{MM}}$ is the standard deviation obtained from the residual of S estimation, and $ \rho $ is Tukey's biweight function:
\begin{equation}
\rho \left( {{u}_{ij}} \right)=\left\{ \begin{array}{*{35}{l}}
\frac{u_{ij}^{2}}{2}-\frac{u_{ij}^{4}}{2{{c}^{2}}}+\frac{u_{ij}^{6}}{6{{c}^{2}}} & ,\left| {{u}_{ij}} \right|\le c  \\
\frac{{{c}^{2}}}{6} & ,\left| {{u}_{ij}} \right|>c \\
\end{array} .\right.
\label{eqmm2}
\end{equation}

\begin{figure*}[t]
\centering
\subfloat[Median and mean errors w.r.t varying noise]{
\begin{minipage}[t]{1\linewidth}
	\centering
	\includegraphics[width=0.9\linewidth]{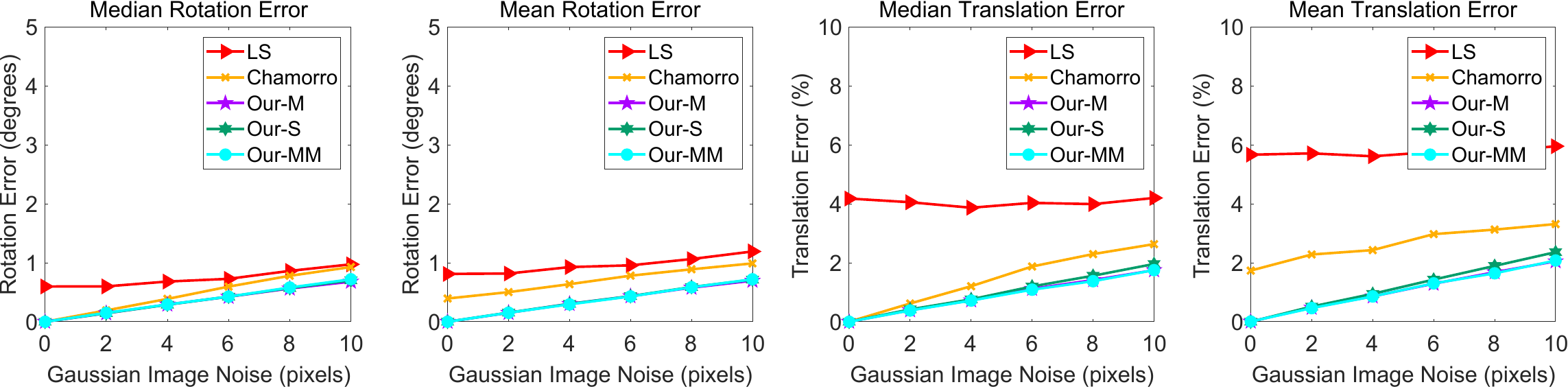}
	\label{fig6.1}
\end{minipage}%
}

\subfloat[Median and mean errors w.r.t varying outlier ratios]{
\begin{minipage}[t]{1\linewidth}
	\centering
	\includegraphics[width=0.9\linewidth]{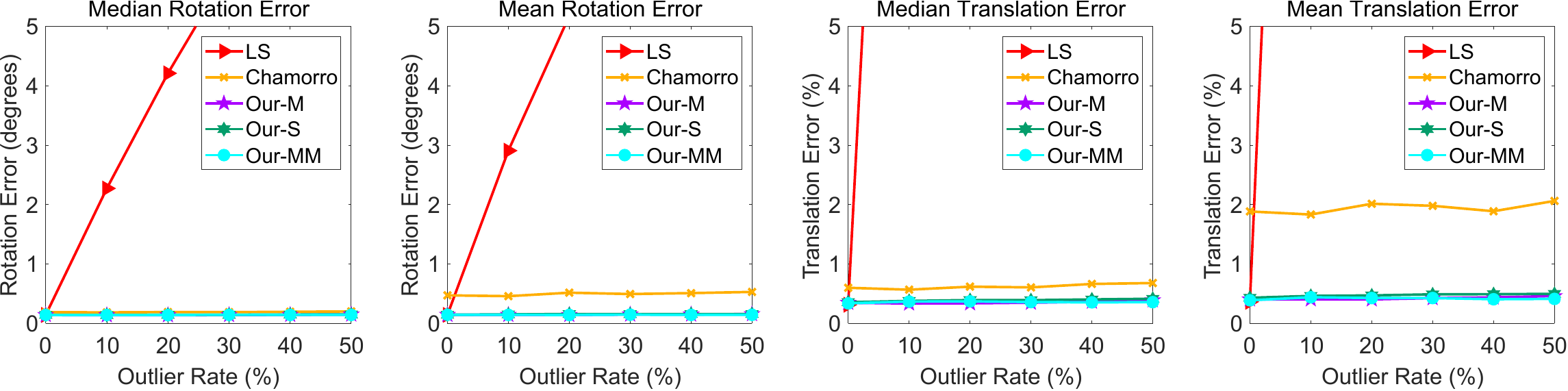}
	\label{fig6.2}
\end{minipage}%
}

\subfloat[Median and mean errors w.r.t varying number of lines]{
\begin{minipage}[t]{1\linewidth}
	\centering
	\includegraphics[width=0.9\linewidth]{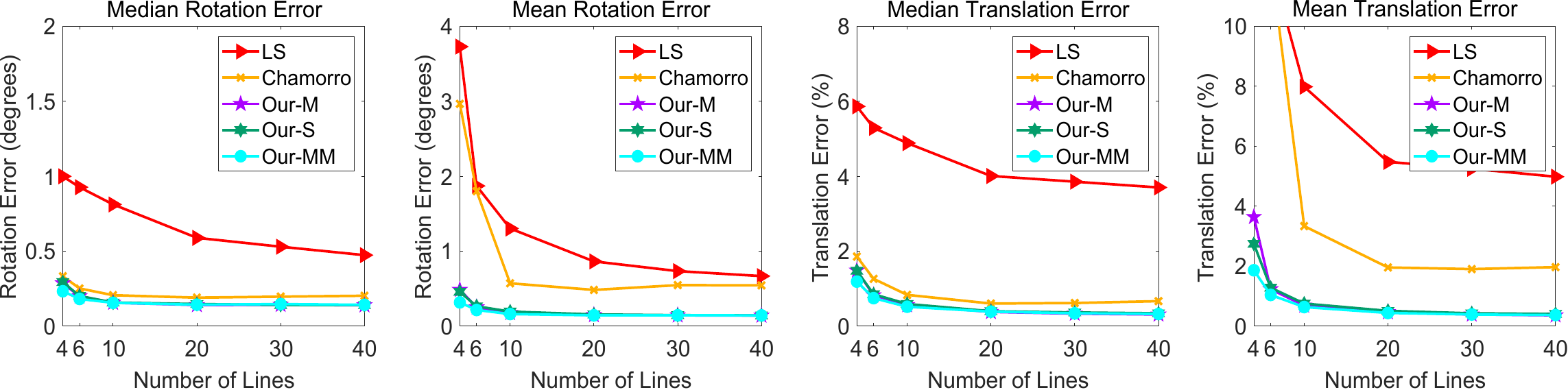}
	\label{fig6.3}
\end{minipage}%
}
\caption{Synthetic data experiment results. (a) Pose error w.r.t varying noise $\sigma$ from 0 to 10 pixels with the number of lines $n=25$ and outlier rate 2 \%. (b) Pose error w.r.t varying outlier ratios from 0 to 50 \% with the number of lines $n=25$ and noise $\sigma=2$ pixels. (c) Pose error w.r.t varying number of lines $n$ from 4 to 40 with fixed noise $\sigma=2$ pixels and outlier rate 2 \%.}
\label{fig6}
\end{figure*}

For each iteration, the MM-based weight value $\omega_{ij}$ updates using Eq. (\ref{eqm6}). The algorithm starts by solving for initial values of the parameters, based on the initial pose $\mathbf X^{(0)}$. Weights are then determined according to the magnitude of the residuals, and iterations are repeated to refine the weighting coefficients until the change of these coefficients is less than a certain allowable error. This refinement process is essential as it mitigates the impact of noise and further improves the robustness of the methods. The robust pose optimization methods combined with M-estimation, S-estimation, and MM-estimation, are summarized in Algorithm 1.

Once the object pose at time $t_i$ is determined, it is combined with the constant-velocity motion model to serve as the initial pose at time $t_{i+1}$. By establishing associations between events and lines using the event-line matching, the robust pose optimization method is then performed to minimize the event-line distances, thereby achieving the continuous pose tracking of objects.

\section{Experimental Evaluation}
\label{sec:experi}
In this section, we conduct experiments with synthetic data, simulated events, and real events, respectively. The proposed methods are compared with state-of-the-art methods, demonstrating their superior accuracy in both challenging and regular scenarios.

\subsection{Synthetic Data Experiments}
\label{experi1}
To generate synthetic data, we are given a virtual event camera with image size $640\times 480$ pixels, focal length $800$ pixels, and the principal point at the image center. The 3D lines are randomly generated and their depth is randomly distributed in the range $ [5,10]$ meters in the camera coordinate frame. Then, these 3D lines are projected on the event plane using the ground truth of the rotation matrix ${{\mathbf{R}}_{truth}}$ and translation vector ${{\mathbf{T}}_{truth}}$. Events are generated randomly near the projected lines. 

To ensure a fair comparison between methods, we provide the same initial value by perturbing the ground truth~\cite{kneip2014opengv}, and carry out a total of 1000 trials in the synthetic experiment. The error metric of rotation matrix $\mathbf{R}$ and translation vector $\mathbf{T}$ are defined as~\cite{7139836,xu2016pose}: 
$Er{r_{\bf{R}}} = \arccos \left( {\left( {\text{Trace}\left( {{{\bf{R}}^ \top }{{\bf{R}}_{{\rm{truth}}}}} \right) - 1} \right)/2} \right)$ and 
$Er{{r}_{\mathbf{T}}}=\left\| {\mathbf{T}-{\mathbf{T}_{\text{truth}}}} \right\|/\left\| {{\mathbf{T}_{\text{truth}}}} \right\|$.

Since no open-source event-based object pose estimation project is yet available, we implement two baseline approaches: \textit{(i)} LS-based method is the traditional nonlinear least square method, that endows each event the same weight for pose optimization. \textit{(ii)} Chamorro \emph{et al.}~\cite{chamorro2022event} estimate and track the 6-DoF pose of the event camera, which is an event-based line-SLAM method. For comparison, we adopt the key techniques of the method and apply them for object pose estimation and tracking. 
Our method combines M estimation, S estimation, and MM estimation, referred to as Our-M, Our-S, and Our-MM, respectively.

\textbf{Accuracy w.r.t varying noise.}
The first experiment is designed to evaluate the accuracy of our method with respect to the increasing noise. The standard deviation of Gaussian noise is added to events, ranging from 0 to 10 pixels in steps of 2 pixels. The number of lines is set to $n=25$, and the outlier rate is 2\%. The median and mean errors of the pose estimation are shown in Fig.~\ref{fig6}\subref{fig6.1}, which almost grows linearly as the noise increases. The experimental results demonstrate that our three methods exhibit comparable levels of accuracy, surpassing those of other methods. The observation suggests that our robust estimation methods are effectively resistant to the effects of Gaussian noise.

\textbf{Accuracy w.r.t varying outlier rate.} 
In this scenario, the outliers are incorrect correspondences generated by randomly selecting correspondences between the lines and events. The outlier rate varies from 0\% to 50\% in steps of 10\%, the number of lines is set to $n=25$, and the Gaussian noise is fixed to $\sigma=2$ pixels. As can be seen in Fig.~\ref{fig6}\subref{fig6.2}, the LS-based method is not robust to outliers, as it includes all events in its estimation process, even those that may be outliers. Our methods achieve satisfactory performance even with the presence of a reasonable amount of outliers. Among them, Our-MM performs the best with the least error. This is because robust estimation algorithms are used, which effectively remove outliers or reduce their influence. 

\textbf{Accuracy w.r.t varying number of lines.} 
In the third experiment, we investigate the accuracy with a fixed outlier rate of 2\% and noise $\sigma=2$ pixels, varying the number of lines from 4 to 40. As indicated in Fig.~\ref{fig6}\subref{fig6.3}, the median and mean errors of the rotation matrix and translation vector gradually stabilize as the number of lines increases. Once the number of lines exceeds ten, the errors remain relatively stable. Furthermore, our methods have demonstrated faster convergence, as well as higher accuracy and reliability compared to other methods.
\begin{figure*}[t]
\centering
\subfloat[High-speed motion.]{
	\begin{minipage}[t]{1\linewidth}
		\includegraphics[width=17cm]{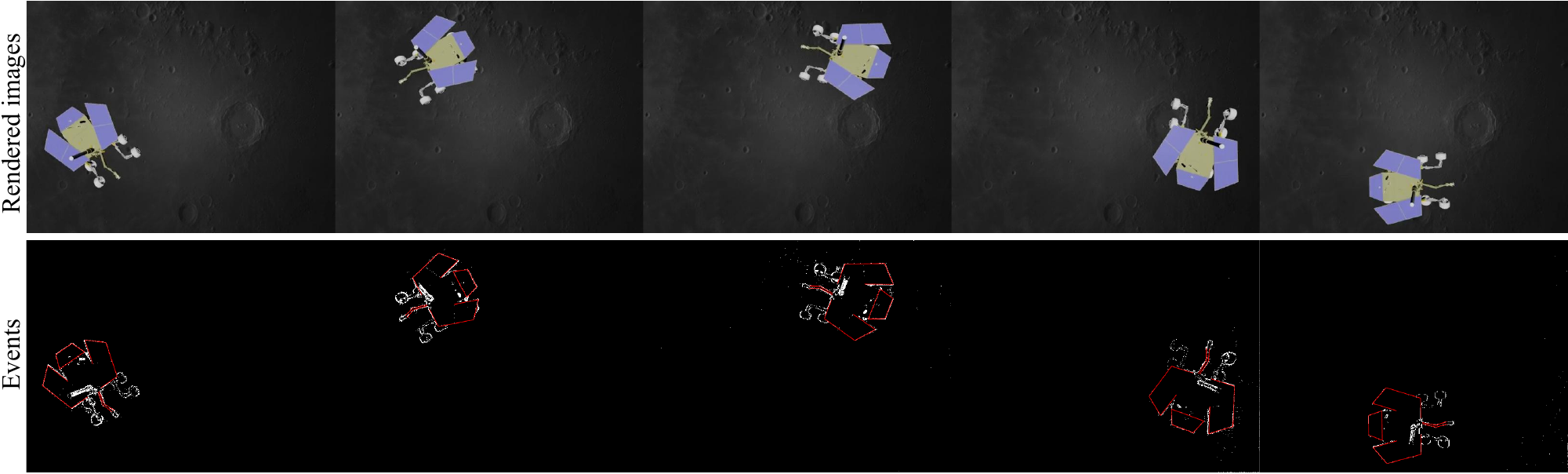}
		\label{fig7a}
	\end{minipage}%
}

\subfloat[HDR scenes.]{
	\begin{minipage}[t]{1\linewidth}
		\includegraphics[width=17cm]{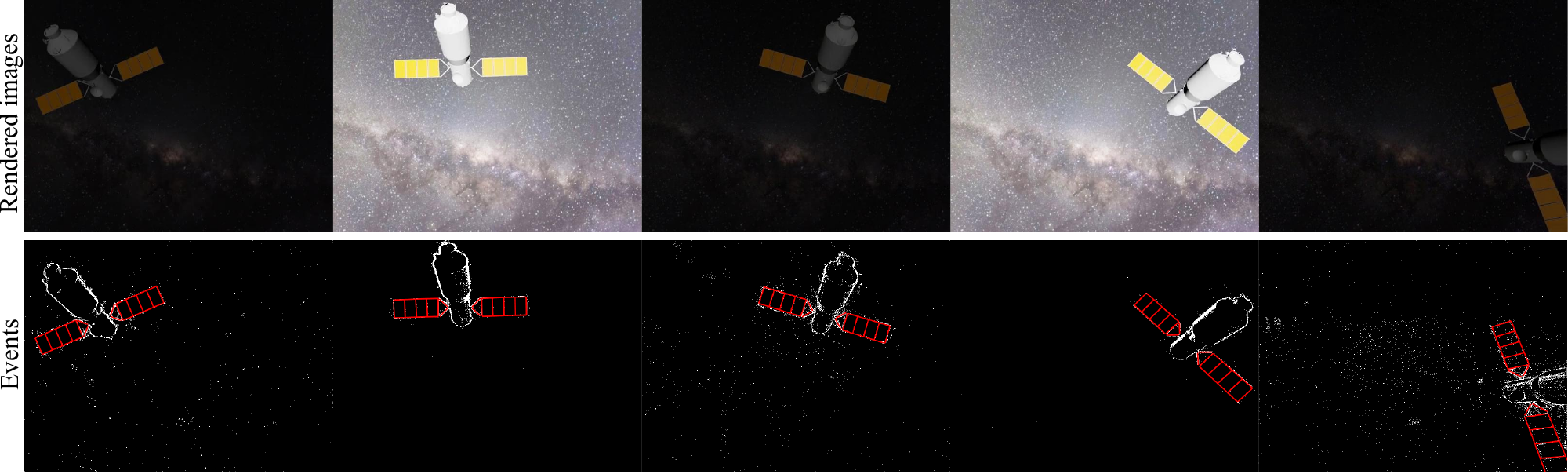}
		\label{fig7b}
	\end{minipage}%
}
\caption{The simulated event experiment results of high-speed motions and HDR scenes. Above is the rendered images of the object motion, below is the generated events and pose tracking results. 3D lines of the object model are reprojected onto the event planes (for visualization purpose only) using Our-MM and Our-S, indicated by the red lines.}
\label{fig7}
\end{figure*}

\begin{table*}[htbp]
\centering
\caption{Experimental setup and pose tracking error of high-speed motions and HDR scenes.}
\setlength{\tabcolsep}{6pt}
\begin{tabular}{ccccccccccc}
	\toprule
	\multirow{4}[1]{*}{Test scenarios}&\multicolumn{5}{c}{Experimental Parameters}&\multicolumn{5}{c}{Absolute Trajectory Error (unit-less)}\\
	\cmidrule(r){2-6}\cmidrule(r){7-11}
	&Description&\thead{Time\\(s)}&\thead{EPS\\(event/s)}&\thead{Average\\speed (m/s)}&\thead{Average angular\\speed (rad/s)}&LS-based&Chamorro&Our-M&Our-S&Our-MM\\
	\midrule
	\multirow{3}[0]{*}{\thead{High-speed\\motions}}&Normal&20.00&$3.43\times{10^5}$&0.71&0.32&1.89&1.45&0.68&0.45&\bf{0.36}\\
	&Fast&10.00&$6.72\times{10^5}$&1.41&0.63&1.79&1.52&0.71&0.43&\bf{0.37}\\
	&Faster&5.00&$4.21\times{10^6}$&2.83&1.26&2.45&1.94&0.77&0.67&\bf{0.49}\\
	\midrule
	\multirow{2}[0]{*}{HDRscenes}&Constantlight&10.00&$5.47\times{10^5}$&0.77&0.18&3.58&2.86&2.96&2.78&\bf{2.57}\\
	&Changinglight&10.00&$1.32\times{10^6}$&0.77&0.18&6.65&5.51&5.13&\bf{4.82}&5.04\\
	\bottomrule
\end{tabular}
\label{tab0}
\end{table*}

\begin{figure}[htbp] 
\centering{\includegraphics[width=8cm]{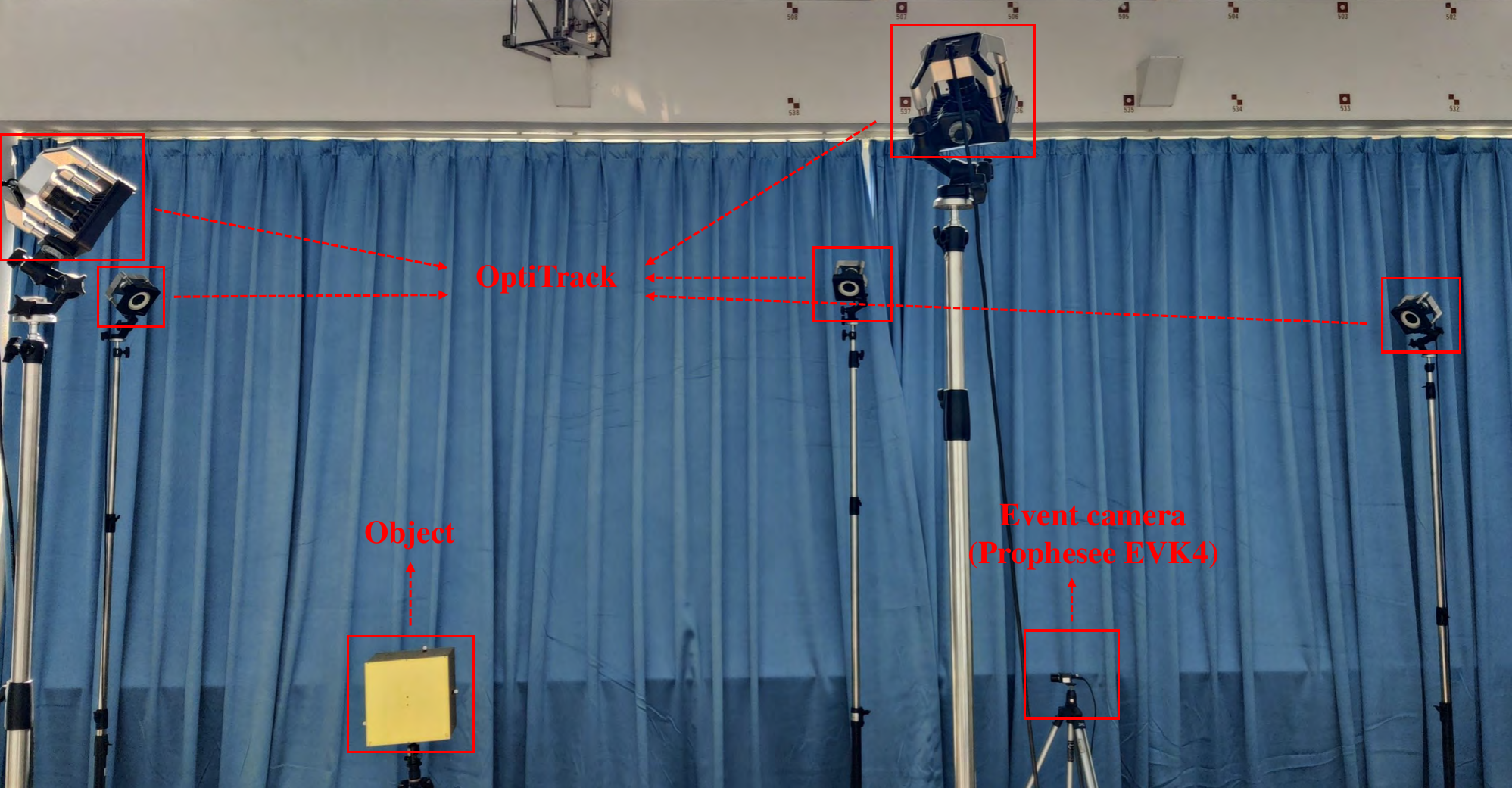}}
\caption{Experimental scene layout for real-world object pose estimation and tracking.}
\label{figExpe_scen}
\end{figure}

\begin{figure*}[htbp]
\centering{\includegraphics[width=17cm]{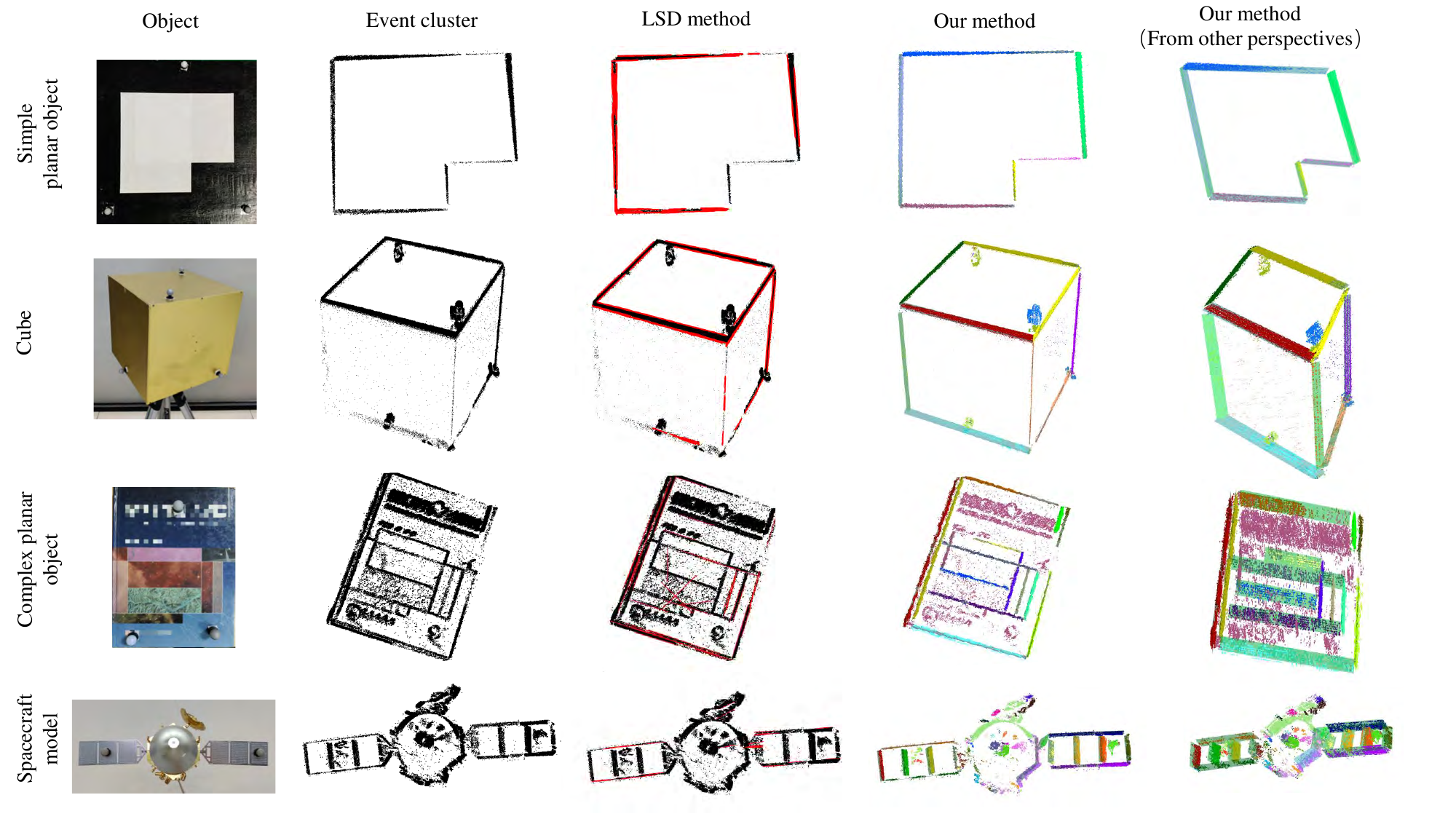}}
\caption{Line detection results of the real data. From left to right: object, event cluster, LSD method, our method (from different perspectives). The results of segmentation and plane fitting are depicted in various colors. The edges of the planes denote the detected lines.}
\label{fig8}
\end{figure*}

\subsection{Simulated event experiment}
\label{experi2}
Due to the rapid development of event cameras, there are many event-based SLAM datasets available. However, there is still a lack of event-based datasets specifically designed for object pose estimation and tracking. To evaluate the precision of the proposed methods in object pose estimation and tracking, we have devised and established an event-based dataset for moving objects, including simulated and real events.

To thoroughly validate the method's feasibility in challenging scenarios, such as high-speed motions and HDR scenes, we conduct simulated event experiments. This entails rendering motion videos of objects using Blender and then transforming the videos into events. Initially, two spacecraft are selected as test objects, each assigned distinct trajectories and velocities. The movements of objects are captured by a stationary monocular camera at a frame rate of 30 Hz and a resolution of $640\times480$ pixels. Then, rendered RGB videos are transformed into event streams using V2E~\cite{9523069}, with the relevant parameters set to be consistent with Blender. The rendered images of object motion are displayed in the upper section of Fig.~\ref{fig7}. To quantitatively assess the accuracy of our pose tracking method, we compare the estimated poses with ground truth and employ the toolbox from Grupp.~\cite{grupp2017evo} to compute the Root Mean Square Error (RMSE) of the absolute trajectory error (ATE).

\textbf{High-speed motion}: To validate the efficacy of our method in tracking high-speed moving objects, we categorize the sequences based on the velocity of object motions: normal, fast, and faster, as shown in Table~\ref{tab0}. The precision of all methods diminishes as velocity increases. Our methods demonstrate robust pose tracking capabilities, as depicted in Fig.~\ref{fig7}\subref{fig7a}, with Our-MM exhibiting the minimal margin of error.

\textbf{HDR scenes}: We introduce rapid changes in illumination during the object movement. Additionally, we establish a control experiment devoid of illumination variations for comparative analysis. The pose tracking errors are quantitatively displayed in Table~\ref{tab0}, and the visualization of the tracking results is depicted in Fig.~\ref{fig7}\subref{fig7b}. Even in HDR scenarios, there is a lot of event noise. Our methods effectively establish associations between events and lines, thereby enabling accurate object tracking.

The experimental results of simulated events demonstrate that our method is capable of handling challenging scenarios, including high-speed motions and HDR scenes. This can be attributed to the advantages of event cameras, which possess a high dynamic range and without motion blur, in contrast to traditional cameras. More importantly, our method effectively leverages the temporal and spatial characteristics of events to establish associations between events and lines, thereby enabling the continuous and stable object tracking, even in challenging scenarios.

\subsection{Real Data Experiments}
\label{experi3}
In this section, we conduct pose tracking experiments using real events. Firstly, we choose the Prophesee EVK4 event camera  ($ 1280\times 720$ pixels), and the OptiTrack motion capture system, both of which have been pre-calibrated. The experimental scene layout is shown in Fig.~\ref{figExpe_scen}. Then, we select several representative objects, including a simple planar object and a non-planar object (cube), which are pretty common in man-made environments. The former consists of merely six planar lines, while the latter encompasses twelve non-planar lines. To further increase the difficulty, we prepare more complex objects with a greater number of lines: a planar object (book) and a non-planar object (spacecraft model). These two objects contain a greater abundance of lines, which are densely distributed. Consequently, this results in an increased occurrence of disordered events, posing challenges for pose estimation and tracking. The event camera is placed at a fixed distance from the objects, and we move the objects while the camera captures their motion and records the corresponding events. The 3D models of these objects are known in advance. Markers are attached to these objects to obtain the ground truth of their poses and trajectories by OptiTrack. 

To further enhance the quality of our real event datasets, we have added numerous challenging conditions, e.g., rapidly changing illumination, noise, reflections, and partial occlusion. Notably, the flashing LED lights of OptiTrack bring additional noise, which is undoubtedly a great challenge for pose estimation. The surface of the cube exhibits diffuse reflection, while the surface of the spacecraft model exhibits specular reflection, resulting in the occurrence of interfering events. Furthermore, our dataset takes into account the partial occlusion of objects, which persists as a significant issue warranting attention.
\begin{table*}[htbp]
\centering
\caption{Quantitative comparison of the initial pose estimation with the state-of-the-art methods.}
\label{tab1}
\begin{tabular}{ccccccc}
\toprule
\multirow{2}[3]{*}{Input}&\multirow{2}[3]{*}{Correspondences}&\multirow{2}[3]{*}{Method}&\multicolumn{4}{c}{Mean reprojection error (unit: pixel)}\\
\cmidrule{4-7}& & &Simple planar object &Cube&Complex planar object&Spacecraft model\\
\midrule
2D/3D points&\multirow{2}[2]{*}{Known}&OPnP~\cite{zheng2013revisiting}&1.72&\textbf{1.54}&\textbf{1.77}&2.47\\
2D/3D lines& &ASPnL~\cite{xu2016pose}&\textbf{1.69}&2.28&3.51&2.62\\
2D/3D lines& &RoPnL~\cite{liu2020globally}&1.85&1.97&1.85&\textbf{2.28}\\	
\midrule
2D/3D points &\multirow{4}[2]{*}{Unknown}&SoftPOSIT~\cite{david2004softposit}&-&1.50&-&3.76\\
2D/3D points& &BlindPnP~\cite{moreno2008pose}&-&1.65&-&3.70\\
2D/3D lines & &RANSAC~\cite{fischler1981random}&2.13&1.54&3.16&3.09\\
2D/3D lines	& &Our&\textbf{1.96}&\textbf{1.44}&\textbf{2.73}&\textbf{2.82} \\
\bottomrule
\end{tabular}%
\end{table*}%

\begin{figure*}[htbp]
\centering
\subfloat[Simple planar object]{
\begin{minipage}[]{0.5\linewidth}
	\centerline{\includegraphics[width=7cm]{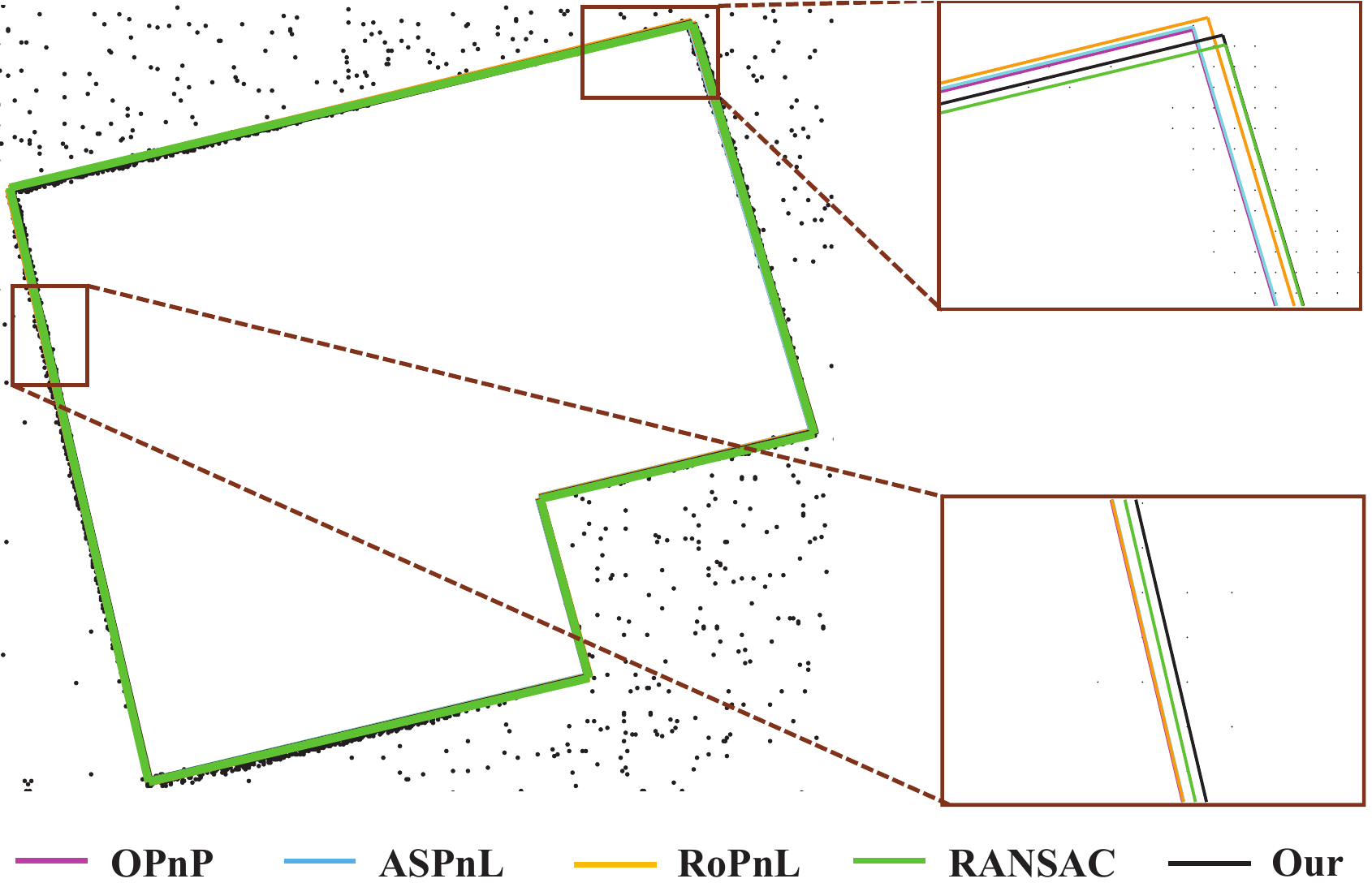}}
	\label{fig9.1}
\end{minipage}%
}
\subfloat[Complex planar object]{
\begin{minipage}[]{0.5\linewidth}
	\centering
	\centerline{\includegraphics[width=7cm]{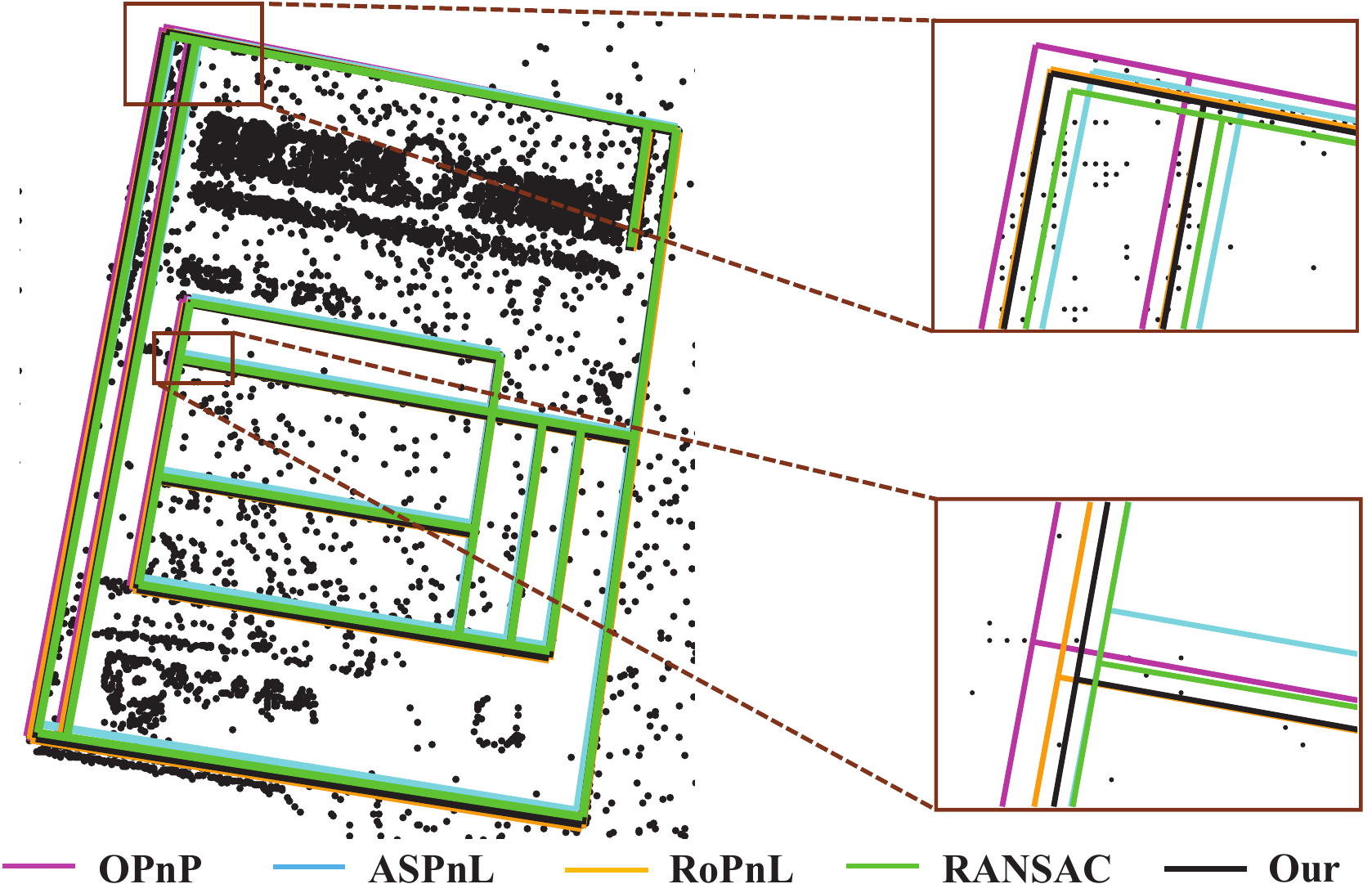}}
	\label{fig9.3}
\end{minipage}%

}
\\

\subfloat[Cube]{
\begin{minipage}[]{0.5\linewidth}
	\centerline{\includegraphics[width=7cm]{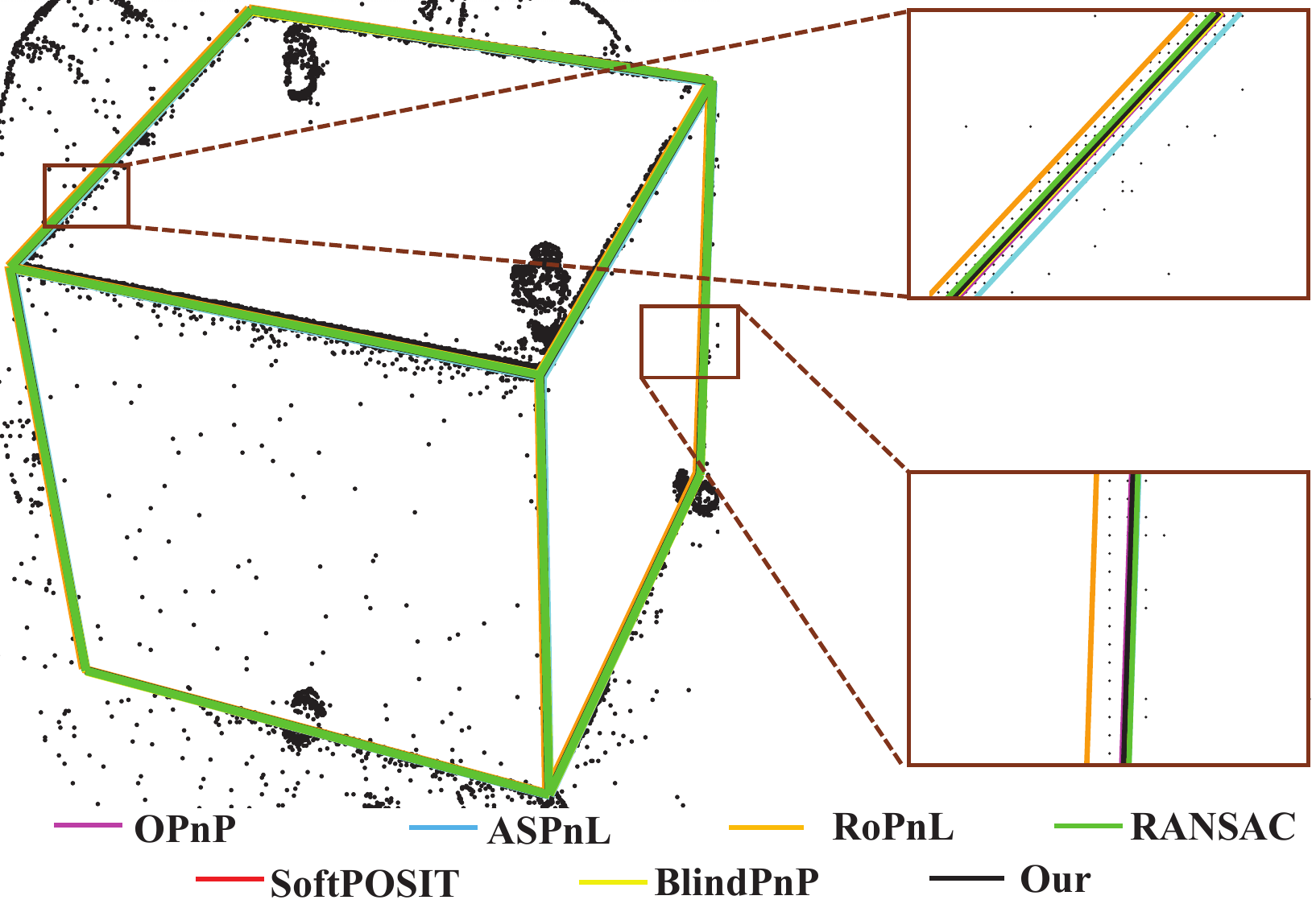}}
	\label{fig9.2}
\end{minipage}%
}
\subfloat[Spacecraft model]{
\begin{minipage}[]{0.5\linewidth}
	\centering
	\centerline{\includegraphics[width=7cm]{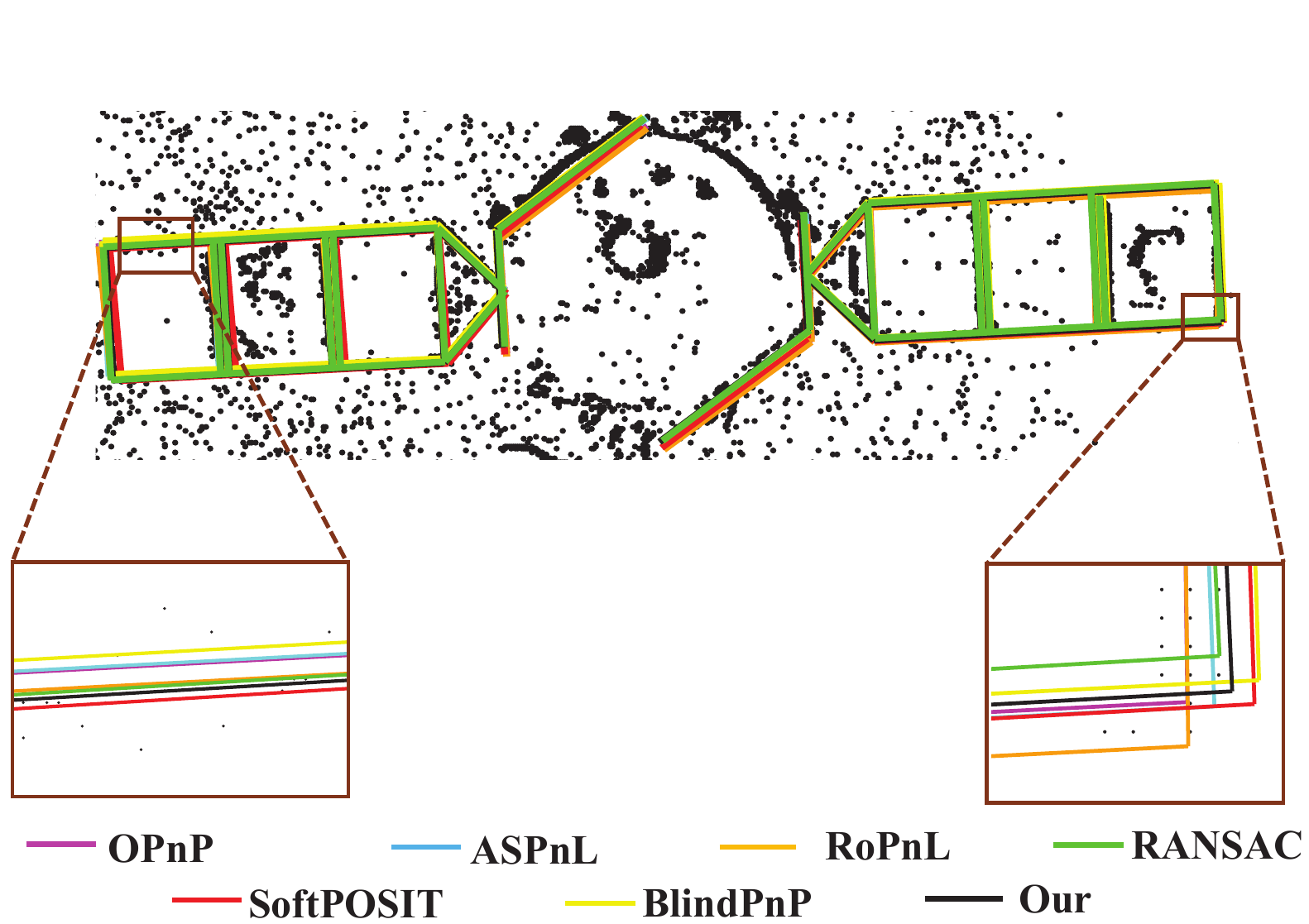}}
	\label{fig9.4}
\end{minipage}%
}
\caption{Qualitative comparison of the initial pose estimation. (a)-(d) respectively depict the reprojected results of a simple planar object, a complex planar object, a cube, and a spacecraft model. The black dots represent events, and the lines of objects are reprojected on the event planes (for visualization purpose only) using state-of-the-art methods.}
\label{fig9}
\end{figure*}

In order to demonstrate the performance of our method in real-world scenarios, we conduct multiple sets of experiments. 

Firstly, we cluster events and extract lines directly from the event clusters. The event cluster is reprojected onto the event plane for visualization, as shown in the second column of Fig.~\ref{fig8}. We adopt the classical LSD algorithm~\cite{von2012lsd} for comparison, which detects lines from the event plane. The LSD algorithm demonstrates remarkable efficacy on conventional images. However, its performance is compromised when applied to events due to the significant amount of noise present in real-world scenarios, often resulting in confusing results, as displayed in the third column of Fig.~\ref{fig8}. The algorithm detects multiple cluttered lines and there are cases of missed detections. Additionally, the detected lines are challenging to associate with time due to the accumulation of events on the event plane, leading to the loss of temporal information. 

Next, we directly extract lines from the event clusters. Firstly, the event cluster is transformed into point clouds in the 3D space-time volume, and undergoes denoising. We perform point cloud segmentation and plane fitting, which results in multiple planes. The intersection lines between the fitted and event planes represent the object lines. The results of point cloud segmentation and plane fitting from different perspectives are depicted in various colors, as presented in columns four and five of Fig.~\ref{fig8}. The edge lines correspond to the object lines at the initial and final times of the event cluster, respectively. By contrast, our method produces visually pleasing line detection results.

Then, we utilize the extracted lines to initialize the object pose. To verify the accuracy of the proposed pose initialization method, we compare it with other baseline methods: \textit{(i)} OPnP~\cite{zheng2013revisiting}, ASPnL~\cite{xu2016pose} and RoPnL~\cite{liu2020globally}. We provide a good quality set of 2D–3D point or line correspondences beforehand. \textit{(ii)} SoftPOSIT~\cite{david2004softposit}, BlindPnP~\cite{moreno2008pose}, and RANSAC~\cite{fischler1981random}. Given 2D-3D points or lines with unknown correspondences, these methods are adopted to estimate the initial pose. Among them, the first three methods require knowledge of the correspondences among points or lines, whereas the latter three do not. The RANSAC method involves setting the confidence level to 0.99 and randomly selecting 2D and 3D lines required for the ASPnL solver~\cite{xu2016pose} within the RANSAC framework. The initial pose yielding the maximum number of inliers is then adopted. The accuracy of the estimated pose is evaluated using the mean reprojection error of points (endpoints). Table~\ref{tab1} reports the quantitative comparison of our method with the state-of-the-art methods. Fig.~\ref{fig9} provides an intuitive visual representation of reprojection results using the initial pose. For the planar object, the size of the search space increases from the 6D pose space to the 8D space of homography, so that a naive extension of the SoftPOSIT and BlindPnP fails to converge. Furthermore, the various errors encountered in real experiments, such as line extraction, unavoidably impact the convergence of these two methods. By contrast, our method can provide a reliable initial pose for all tested objects with high accuracy, without requiring line correspondences or pose prior.
\begin{figure*}[htbp] 
\centerline{\includegraphics[width=18cm]{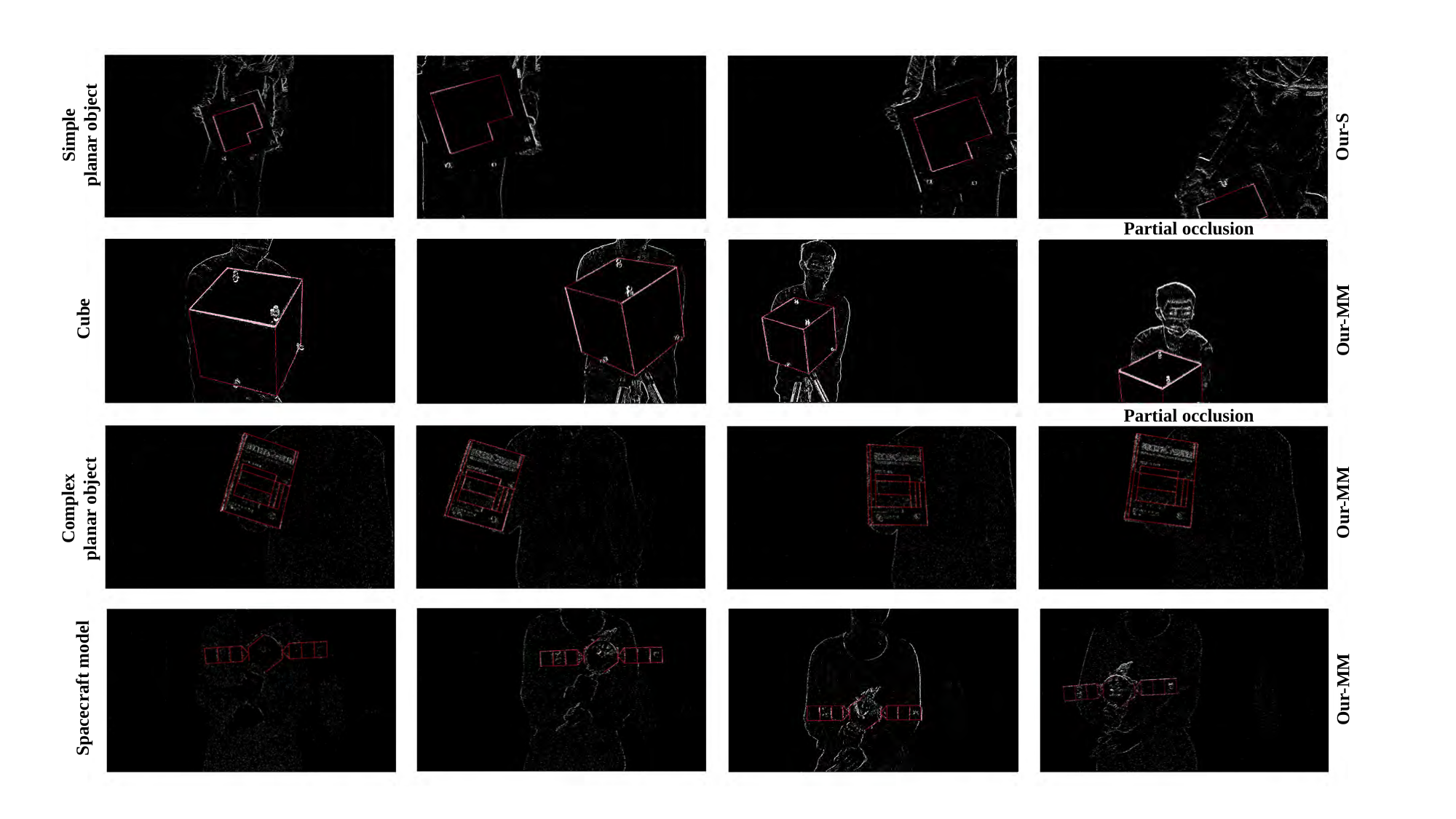}}
\caption{Visualization of the pose tracking results. 3D lines of the object model are reprojected onto the event planes (for visualization purpose only) using Our-S and Our-MM methods, indicated by the red lines.}
\label{fig10}
\end{figure*}

\begin{table*}[ht]
\centering
\caption{Quantitative comparison with the state-of-the-art methods in terms of Absolute Trajectory Error.}
\label{tab2}
\setlength{\tabcolsep}{7pt}
\newcommand{\tabincell}[2]{\begin{tabular}{@{}#1@{}}#2\end{tabular}}
\begin{tabular}{cccccccccccc}
\toprule
\multirow{2}[2]{*}{Object} & \multirow{2}[2]{*}{Method} & \multicolumn{4}{c}{Absolute Trajectory Error (unit-less)} & \multirow{2}[2]{*}{Object} & \multirow{2}[2]{*}{Method} & \multicolumn{4}{c}{Absolute Trajectory Error (unit-less)} \\
\cmidrule{3-6}\cmidrule{9-12}& &rmse&mean&median&std& & &rmse&mean&median&std\\
\midrule
\multirow{5}[2]{*}{\tabincell{c}{Simple planar object}}&LS-based&3.62&2.35&2.00&1.72&\multirow{5}[2]{*}{Cube}&LS-based&14.98&7.30&4.52&13.08\\
&Chamorro&3.45&2.42&2.03&1.74& &Chamorro&6.62&3.28&1.77&5.75\\
&Our-M&0.77&0.73&0.76&0.19& &Our-M&1.40&1.15&0.93&0.80\\
&Our-S&\textbf{0.56}&\textbf{0.55}&\textbf{0.53}&\textbf{0.14}& &Our-S&1.29&1.05&\textbf{0.84}&0.73\\
&Our-MM&0.59&0.57&\textbf{0.53}&0.19& &Our-MM&\textbf{1.12}&\textbf{0.99}&0.86&\textbf{0.53}\\
\midrule
\multirow{5}[2]{*}{\tabincell{c}{Complex planar object}}&LS-based&3.55&3.54&3.51&\textbf{0.18}& \multirow{5}[2]{*}{\tabincell{c}{Spacecraft model}}&LS-based&2.26&2.26&2.23&\textbf{0.10}\\
&Chamorro&3.50&3.48&3.45&0.23& &Chamorro&1.47&1.46&1.42&0.18\\
&Our-M&3.48&3.47&3.48&0.21& &Our-M&1.38&1.36&1.33&0.20\\
&Our-S&3.36&3.35&3.34&0.26& &Our-S&1.46&1.44&1.41&0.19\\
&Our-MM&\textbf{2.18}&\textbf{2.16}&\textbf{2.00}&0.33& &Our-MM&\textbf{1.36}&\textbf{1.35}&\textbf{1.31}&0.21\\
\bottomrule
\end{tabular}
\end{table*}

\begin{figure*}[ht]
\centering
\subfloat[]{
	\begin{minipage}[t]{0.72\linewidth}
		\centerline{\includegraphics[width=12.5cm,height=8.5cm]{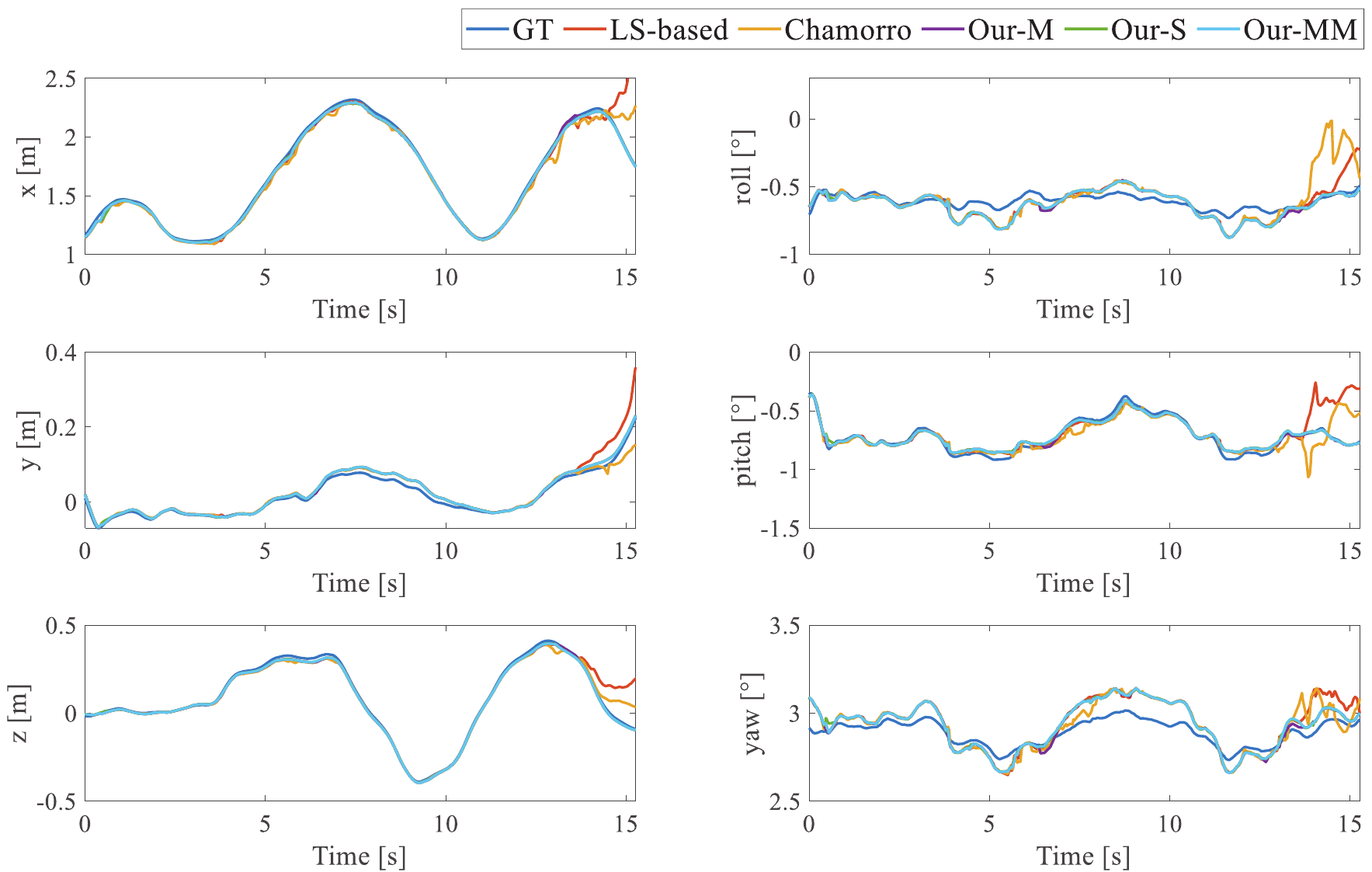}}
		\label{fig11.1}
	\end{minipage}%
}
\subfloat[]{
	\begin{minipage}[t]{0.25\linewidth}
		\centering
		\centerline{\includegraphics[width=5cm]{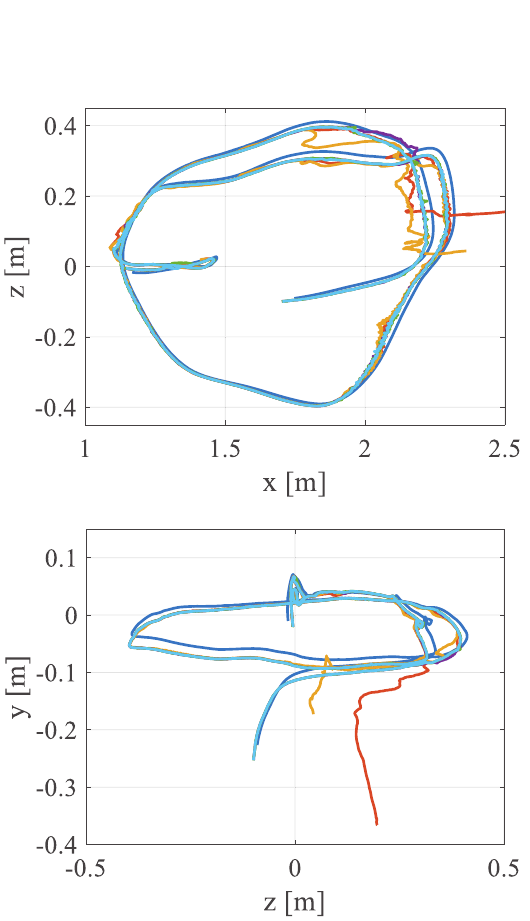}}
		\label{fig11.2}
	\end{minipage}%
}
\caption{Experimental results of pose estimation and tracking for the cube object. (a) Quantitative comparison with the state-of-the-art methods in terms of rotation and translation error. (b) Qualitative comparison of the object trajectories.}
\label{fig11}
\end{figure*}

\textbf{Simple object experiments.} 
We first conduct experiments to analyze the effect of our methods on simple objects (a planar object and a cube), both of which have a simple geometric structure. The distance thresholds are set as ${{d}_{t}}=8$ pixels and $ d_m=0.5d_l $, which is a good accuracy-efficiency compromise. To illustrate the robustness and accuracy of the methods, we evaluate the APE between the estimated poses and the ground truth. The error distributions of the pose tracking results are shown in Table~\ref{tab2}. It is evident that our methods perform better than other methods. For the planar object, the APE indicates that Our-S outperforms other baseline methods. The visualization results of pose tracking are shown in the top row of Fig.~\ref{fig10}.

In regards to the cube, the results of the pose estimation are visually presented in Fig.~\ref{fig11}\subref{fig11.1}, while the quantified pose errors are shown in Table~\ref{tab2}. Our methods have the capability to continuously track the cube. Especially in the final stage, after the thirteenth second, the cube gradually moves out of the camera's field of view. As the pose errors of other methods start to escalate noticeably, our methods demonstrate greater robustness and stability in comparison. Among them, Our-MM achieves the highest accuracy. The trajectories are obtained by sequentially concatenating object poses, and illustrated after alignment with the ground truth to visualize the comparison results (Fig.~\ref{fig11}\subref{fig11.2}). As can be seen, the proposed methods yield lower trajectory errors compared to other methods. The second row of Fig.~\ref{fig10} demonstrates the partial pose tracking results. Even in scenarios where the object is partially occluded, Our-MM continues to exhibit exceptional performance. This is because we utilize the constant-velocity motion model to predict the object pose at the next moment, and effectively establish correspondences between visible events and lines.

\textbf{Complex object experiments.}
To further highlight the potential of our method, we also evaluate our methods on different challenging sequences of a complex planar object and a spacecraft model. During their movement, these objects exhibit a greater abundance and densification of lines, generate more events, and also introduce more noise. The distance thresholds remain unchanged, with ${{d}_{t}}=8$ pixels and $ d_m=0.5d_l $, as stated in the previous experiment. Due to an increased occurrence of disordered events impacting the precision of these methods, there has been an observed elevation in the overall APE depicted in Table~\ref{tab2}, when compared to the previous experiment.
Nonetheless, Our-MM has higher accuracy in solving poses of these complex objects. The pose tracking results of these two objects are shown in the last two rows of Fig.~\ref{fig10}. It is worth mentioning that, despite selecting fewer events in the event cluster compared to the previous experiment, our methods are still able to track these complex objects stably. 

In summary, we recommend a combination of the above methods to leverage their advantages fully: for simple planar objects, it is recommended to utilize Our-S. Compared to Our-M, which only uses the median as the weight value, Our-S takes into account the data distribution more effectively by utilizing the residual standard deviation. When dealing with complex objects, it is advisable to employ our Our-MM. During the movement of complex objects, an increased number of events are generated, simultaneously resulting in the introduction of a lot of noise. Our-MM demonstrates enhanced efficiency when addressing events that contain a large number of breakdown values.

\section{Conclusion}
\label{sec:concl}
We propose a line-based object pose estimation and tracking method for both planar and non-planar objects using an event camera. The method makes full use of the object model and event streams to provide an initial pose, establish event-line associations, and leverage a robust pose optimization and tracking method by minimizing event-line distances. Our experimental results demonstrate the effectiveness and robustness of the method both quantitatively and qualitatively. The proposed method is able to deal with event noise as well as outliers. It outperforms state-of-the-art methods, exhibiting higher robustness against noise and accurately solving the object pose even under partial occlusions. Future work will involve a generalization of the method to other cost functions, or its adaptation to handle more general scenarios, such as objects with curved edges rather than solely lines.

\bibliographystyle{IEEEtran}   
\bibliography{refe} 

\begin{thebibliography}{10}
\providecommand{\url}[1]{#1}
\csname url@samestyle\endcsname
\providecommand{\newblock}{\relax}
\providecommand{\bibinfo}[2]{#2}
\providecommand{\BIBentrySTDinterwordspacing}{\spaceskip=0pt\relax}
\providecommand{\BIBentryALTinterwordstretchfactor}{4}
\providecommand{\BIBentryALTinterwordspacing}{\spaceskip=\fontdimen2\font plus
\BIBentryALTinterwordstretchfactor\fontdimen3\font minus
  \fontdimen4\font\relax}
\providecommand{\BIBforeignlanguage}[2]{{%
\expandafter\ifx\csname l@#1\endcsname\relax
\typeout{** WARNING: IEEEtran.bst: No hyphenation pattern has been}%
\typeout{** loaded for the language `#1'. Using the pattern for}%
\typeout{** the default language instead.}%
\else
\language=\csname l@#1\endcsname
\fi
#2}}
\providecommand{\BIBdecl}{\relax}
\BIBdecl

\bibitem{konishi2016fast}
Y.~Konishi, Y.~Hanzawa, M.~Kawade, and M.~Hashimoto, ``Fast {6D} pose
  estimation from a monocular image using hierarchical pose trees,'' in
  \emph{Proceedings of the European Conference on Computer Vision}, 2016, pp.
  398--413.

\bibitem{barrois20133d}
B.~Barrois and C.~W{\"o}hler, ``{3D} pose estimation of vehicles using stereo
  camera,'' in \emph{Transportation Technologies for Sustainability}, 2013, pp.
  1--24.

\bibitem{jain2011real}
H.~P. Jain, A.~Subramanian, S.~Das, and A.~Mittal, ``Real-time upper-body human
  pose estimation using a depth camera,'' in \emph{Proceedings of the
  International Conference on Computer Vision/Computer Graphics Collaboration
  Techniques and Applications}, 2011, pp. 227--238.

\bibitem{vincent2005monocular}
L.~Vincent, ``Monocular model-based {3D} tracking of rigid objects: A survey,''
  \emph{Foundations and Trends in Computer Graphics and Vision}, vol.~1, no.~1,
  pp. 1--89, 2005.

\bibitem{gallego_event-based_2022}
G.~Gallego, T.~Delbruck, G.~Orchard, C.~Bartolozzi, B.~Taba, A.~Censi,
  S.~Leutenegger, A.~J. Davison, J.~Conradt, K.~Daniilidis, and D.~Scaramuzza,
  ``\BIBforeignlanguage{en}{Event-{Based} {Vision}: {A} {Survey}},''
  \emph{\BIBforeignlanguage{en}{IEEE Transactions on Pattern Analysis and
  Machine Intelligence}}, vol.~44, no.~1, pp. 154--180, 2022.

\bibitem{huber1973robust}
P.~J. Huber, ``Robust regression: asymptotics, conjectures and monte carlo,''
  \emph{The annals of statistics}, pp. 799--821, 1973.

\bibitem{rousseeuw1984robust}
P.~Rousseeuw and V.~Yohai, ``Robust regression by means of s-estimators,'' in
  \emph{Robust and nonlinear time series analysis}, 1984, pp. 256--272.

\bibitem{opromolla2017review}
R.~Opromolla, G.~Fasano, G.~Rufino, and M.~Grassi, ``A review of cooperative
  and uncooperative spacecraft pose determination techniques for
  close-proximity operations,'' \emph{Progress in Aerospace Sciences}, vol.~93,
  pp. 53--72, 2017.

\bibitem{kneip2011novel}
L.~Kneip, D.~Scaramuzza, and R.~Siegwart, ``A novel parametrization of the
  perspective-three-point problem for a direct computation of absolute camera
  position and orientation,'' in \emph{Proceedings of the IEEE Conference on
  Computer Vision and Pattern Recognition}, 2011, pp. 2969--2976.

\bibitem{dhome1989determination}
M.~Dhome, M.~Richetin, J.-T. Lapreste, and G.~Rives, ``Determination of the
  attitude of 3d objects from a single perspective view,'' \emph{IEEE
  Transactions on Pattern Analysis and Machine Intelligence}, vol.~11, no.~12,
  pp. 1265--1278, 1989.

\bibitem{lepetit2009epnp}
V.~Lepetit, F.~Moreno-Noguer, and P.~Fua, ``E{PnP}: An accurate o (n) solution
  to the {PnP} problem,'' \emph{International Journal of Computer Vision},
  vol.~81, no.~2, pp. 155--166, 2009.

\bibitem{kneip2014upnp}
L.~Kneip, H.~Li, and Y.~Seo, ``{UPnP}: An optimal o (n) solution to the
  absolute pose problem with universal applicability,'' in \emph{Proceedings of
  the European Conference on Computer Vision}, 2014, pp. 127--142.

\bibitem{xu2016pose}
C.~Xu, L.~Zhang, L.~Cheng, and R.~Koch, ``Pose estimation from line
  correspondences: A complete analysis and a series of solutions,'' \emph{IEEE
  Transactions on Pattern Analysis and Machine Intelligence}, vol.~39, no.~6,
  pp. 1209--1222, 2016.

\bibitem{david2004softposit}
P.~David, D.~Dementhon, R.~Duraiswami, and H.~Samet, ``{SoftPOSIT}:
  Simultaneous pose and correspondence determination,'' \emph{International
  Journal of Computer Vision}, vol.~59, no.~3, pp. 259--284, 2004.

\bibitem{moreno2008pose}
F.~Moreno-Noguer, V.~Lepetit, and P.~Fua, ``Pose priors for simultaneously
  solving alignment and correspondence,'' in \emph{Proceedings of the European
  Conference on Computer Vision}, 2008, pp. 405--418.

\bibitem{fischler1981random}
M.~A. Fischler and R.~C. Bolles, ``Random sample consensus: a paradigm for
  model fitting with applications to image analysis and automated
  cartography,'' \emph{Communications of the ACM}, vol.~24, no.~6, pp.
  381--395, 1981.

\bibitem{Campbell_2017_ICCV}
D.~Campbell, L.~Petersson, L.~Kneip, and H.~Li, ``Globally-optimal inlier set
  maximisation for simultaneous camera pose and feature correspondence,'' in
  \emph{Proceedings of the IEEE International Conference on Computer Vision},
  Oct 2017.

\bibitem{campbell2019alignment}
D.~Campbell, L.~Petersson, L.~Kneip, H.~Li, and S.~Gould, ``The alignment of
  the spheres: Globally-optimal spherical mixture alignment for camera pose
  estimation,'' in \emph{Proceedings of the IEEE Conference on Computer Vision
  and Pattern Recognition}, 2019, pp. 11\,796--11\,806.

\bibitem{kuo2020mask2cad}
W.~Kuo, A.~Angelova, T.-Y. Lin, and A.~Dai, ``Mask2cad: 3d shape prediction by
  learning to segment and retrieve,'' in \emph{Proceedings of the European
  Conference on Computer Vision}, 2020, pp. 260--277.

\bibitem{gumeli2022roca}
C.~G{\"u}meli, A.~Dai, and M.~Nie{\ss}ner, ``Roca: robust cad model retrieval
  and alignment from a single image,'' in \emph{Proceedings of the IEEE
  Conference on Computer Vision and Pattern Recognition}, 2022, pp. 4022--4031.

\bibitem{10433529}
J.~Liu, W.~Sun, C.~Liu, H.~Yang, X.~Zhang, and A.~Mian, ``Mh6d:
  Multi-hypothesis consistency learning for category-level 6-d object pose
  estimation,'' \emph{IEEE Transactions on Neural Networks and Learning
  Systems}, pp. 1--14, 2024.

\bibitem{liu2022hff6d}
J.~Liu, W.~Sun, C.~Liu, X.~Zhang, S.~Fan, and W.~Wu, ``Hff6d: Hierarchical
  feature fusion network for robust 6d object pose tracking,'' \emph{IEEE
  Transactions on Circuits and Systems for Video Technology}, vol.~32, no.~11,
  pp. 7719--7731, 2022.

\bibitem{10043016}
J.~Liu, W.~Sun, C.~Liu, X.~Zhang, and Q.~Fu, ``Robotic continuous grasping
  system by shape transformer-guided multiobject category-level 6-d pose
  estimation,'' \emph{IEEE Transactions on Industrial Informatics}, vol.~19,
  no.~11, pp. 11\,171--11\,181, 2023.

\bibitem{hinterstoisser2011gradient}
S.~Hinterstoisser, C.~Cagniart, S.~Ilic, P.~Sturm, N.~Navab, P.~Fua, and
  V.~Lepetit, ``Gradient response maps for real-time detection of textureless
  objects,'' \emph{IEEE Transactions on Pattern Analysis and Machine
  Intelligence}, vol.~34, no.~5, pp. 876--888, 2011.

\bibitem{munoz2016fast}
E.~Mu{\~n}oz, Y.~Konishi, V.~Murino, and A.~Del~Bue, ``Fast {6D} pose
  estimation for texture-less objects from a single rgb image,'' in
  \emph{Proceedings of the IEEE International Conference on Robotics and
  Automation}, 2016, pp. 5623--5630.

\bibitem{nguyen2022templates}
V.~N. Nguyen, Y.~Hu, Y.~Xiao, M.~Salzmann, and V.~Lepetit, ``Templates for 3d
  object pose estimation revisited: generalization to new objects and
  robustness to occlusions,'' in \emph{Proceedings of the IEEE Conference on
  Computer Vision and Pattern Recognition}, 2022, pp. 6771--6780.

\bibitem{Park_2020_CVPR}
K.~Park, A.~Mousavian, Y.~Xiang, and D.~Fox, ``Latentfusion: End-to-end
  differentiable reconstruction and rendering for unseen object pose
  estimation,'' in \emph{Proceedings of the IEEE Conference on Computer Vision
  and Pattern Recognition}, 2020.

\bibitem{weikersdorfer2013simultaneous}
D.~Weikersdorfer, R.~Hoffmann, and J.~Conradt, ``Simultaneous localization and
  mapping for event-based vision systems,'' in \emph{Proceedings of the
  International Conference Computer Vision Systems}, 2013, pp. 133--142.

\bibitem{bryner2019event}
S.~Bryner, G.~Gallego, H.~Rebecq, and D.~Scaramuzza, ``Event-based, direct
  camera tracking from a photometric 3d map using nonlinear optimization,'' in
  \emph{Proceedings of the IEEE International Conference on Robotics and
  Automation}, 2019, pp. 325--331.

\bibitem{chamorro2022event}
W.~Chamorro, J.~Sol{\`a}, and J.~Andrade-Cetto, ``Event-based line slam in
  real-time,'' \emph{IEEE Robotics and Automation Letters}, vol.~7, no.~3, pp.
  8146--8153, 2022.

\bibitem{s11263-017-1050-6}
H.~Rebecq, G.~Gallego, E.~Mueggler, and D.~Scaramuzza, ``{EMVS}: Event-based
  multi-view stereo--3d reconstruction with an event camera in real-time,''
  \emph{International Journal of Computer Vision}, vol. 126, no.~12, p.
  1394–1414, 2018.

\bibitem{le_gentil_idol_2020-1}
C.~Le~Gentil, F.~Tschopp, I.~Alzugaray, T.~Vidal-Calleja, R.~Siegwart, and
  J.~Nieto, ``{IDOL}: A framework for {IMU-DVS} odometry using lines,'' in
  \emph{Proceedings of the IEEE International Conference on Intelligent Robots
  and Systems}, 2020, pp. 5863--5870.

\bibitem{reverter2016neuromorphic}
D.~Reverter~Valeiras, G.~Orchard, S.-H. Ieng, and R.~B. Benosman,
  ``Neuromorphic event-based 3d pose estimation,'' \emph{Frontiers in
  neuroscience}, vol.~9, p. 522, 2016.

\bibitem{reverter2016event}
D.~Reverter~Valeiras, S.~Kime, S.-H. Ieng, and R.~B. Benosman, ``An event-based
  solution to the perspective-n-point problem,'' \emph{Frontiers in
  neuroscience}, vol.~10, p. 208, 2016.

\bibitem{jawaid2022towards}
M.~Jawaid, E.~Elms, Y.~Latif, and T.-J. Chin, ``Towards bridging the space
  domain gap for satellite pose estimation using event sensing,'' in
  \emph{Proceedings of IEEE International Conference on Robotics and
  Automation}.\hskip 1em plus 0.5em minus 0.4em\relax IEEE, 2023, pp.
  11\,866--11\,873.

\bibitem{zheng2022spike}
Y.~Zheng, Z.~Yu, S.~Wang, and T.~Huang, ``Spike-based motion estimation for
  object tracking through bio-inspired unsupervised learning,'' \emph{IEEE
  Transactions on Image Processing}, vol.~32, pp. 335--349, 2022.

\bibitem{huang20221000}
T.~Huang, Y.~Zheng, Z.~Yu, R.~Chen, Y.~Li, R.~Xiong, L.~Ma, J.~Zhao, S.~Dong,
  L.~Zhu \emph{et~al.}, ``1000$\times$ faster camera and machine vision with
  ordinary devices,'' \emph{Engineering}, vol.~25, pp. 110--119, 2023.

\bibitem{calabrese2019dhp19}
E.~Calabrese, G.~Taverni, C.~Awai~Easthope, S.~Skriabine, F.~Corradi,
  L.~Longinotti, K.~Eng, and T.~Delbruck, ``Dhp19: Dynamic vision sensor 3d
  human pose dataset,'' in \emph{Proceedings of the IEEE Conference on Computer
  Vision and Pattern Recognition Workshops}, 2019.

\bibitem{xu2020eventcap}
L.~Xu, W.~Xu, V.~Golyanik, M.~Habermann, L.~Fang, and C.~Theobalt, ``Eventcap:
  Monocular 3d capture of high-speed human motions using an event camera,'' in
  \emph{Proceedings of the IEEE Conference on Computer Vision and Pattern
  Recognition}, 2020, pp. 4968--4978.

\bibitem{8565885}
H.~Tjaden, U.~Schwanecke, E.~Schömer, and D.~Cremers, ``A region-based
  gauss-newton approach to real-time monocular multiple object tracking,''
  \emph{IEEE Transactions on Pattern Analysis and Machine Intelligence},
  vol.~41, no.~8, pp. 1797--1812, 2019.

\bibitem{vidal2018ultimate}
A.~R. Vidal, H.~Rebecq, T.~Horstschaefer, and D.~Scaramuzza, ``Ultimate {SLAM}?
  combining events, images, and {IMU} for robust visual {SLAM} in {HDR} and
  high-speed scenarios,'' \emph{IEEE Robotics and Automation Letters}, vol.~3,
  no.~2, pp. 994--1001, 2018.

\bibitem{rebecq_real-time_2017}
H.~Rebecq, T.~Horstschaefer, and D.~Scaramuzza,
  ``\BIBforeignlanguage{en}{Real-time {Visual}-{Inertial} {Odometry} for
  {Event} {Cameras} using {Keyframe}-based {Nonlinear} {Optimization}},'' in
  \emph{\BIBforeignlanguage{en}{Procedings of the {British} {Machine} {Vision}
  {Conference}}}, 2017, p.~16.

\bibitem{peng_xin_continuous_2021}
P.~Xin, X.~Wanting, Y.~Jiaqi, and K.~Laurent, ``Continuous event-line
  constraint for closed-form velocity initialization,'' in \emph{Proceedings of
  the British Machine Vision Conference}.

\bibitem{lu_pairwise_2016}
X.~Lu, J.~Yao, J.~Tu, K.~Li, L.~Li, and Y.~Liu,
  ``\BIBforeignlanguage{en}{Pairwise linkage for point cloud segmentation},''
  \emph{\BIBforeignlanguage{en}{ISPRS Annals of the Photogrammetry, Remote
  Sensing and Spatial Information Sciences}}, vol. III-3, pp. 201--208, 2016.

\bibitem{7139836}
L.~Carlone, R.~Tron, K.~Daniilidis, and F.~Dellaert, ``Initialization
  techniques for 3d slam: A survey on rotation estimation and its use in pose
  graph optimization,'' in \emph{Proceedings of the IEEE International
  Conference on Robotics and Automation}, 2015, pp. 4597--4604.

\bibitem{liu2020globally}
Y.~Liu, G.~Chen, and A.~Knoll, ``Globally optimal camera orientation estimation
  from line correspondences by bnb algorithm,'' \emph{IEEE Robotics and
  Automation Letters}, vol.~6, no.~1, pp. 215--222, 2020.

\bibitem{hartley2009global}
R.~I. Hartley and F.~Kahl, ``Global optimization through rotation space
  search,'' \emph{International Journal of Computer Vision}, vol.~82, no.~1,
  pp. 64--79, 2009.

\bibitem{brown2019family}
M.~Brown, D.~Windridge, and J.-Y. Guillemaut, ``A family of globally optimal
  branch-and-bound algorithms for 2d--3d correspondence-free registration,''
  \emph{Pattern Recognition}, vol.~93, pp. 36--54, 2019.

\bibitem{yohai1987high}
V.~J. Yohai, ``High breakdown-point and high efficiency robust estimates for
  regression,'' \emph{The Annals of statistics}, pp. 642--656, 1987.

\bibitem{black1996unification}
M.~J. Black and A.~Rangarajan, ``On the unification of line processes, outlier
  rejection, and robust statistics with applications in early vision,''
  \emph{International journal of computer vision}, vol.~19, no.~1, pp. 57--91,
  1996.

\bibitem{susanti2014m}
Y.~Susanti, H.~Pratiwi, S.~Sulistijowati, T.~Liana \emph{et~al.}, ``M
  estimation, s estimation, and mm estimation in robust regression,''
  \emph{International Journal of Pure and Applied Mathematics}, vol.~91, no.~3,
  pp. 349--360, 2014.

\bibitem{kneip2014opengv}
L.~Kneip and P.~Furgale, ``{OpenGV}: A unified and generalized approach to
  real-time calibrated geometric vision,'' in \emph{Proceedings of the IEEE
  International Conference on Robotics and Automation}, 2014, pp. 1--8.

\bibitem{9523069}
Y.~Hu, S.-C. Liu, and T.~Delbruck, ``{V2E}: From video frames to realistic dvs
  events,'' in \emph{Proceedings of IEEE Conference on Computer Vision and
  Pattern Recognition Workshops}, 2021, pp. 1312--1321.

\bibitem{grupp2017evo}
M.~Grupp, ``{EVO}: Python package for the evaluation of odometry and slam.''
  \url{https://github.com/MichaelGrupp/evo}, 2017.

\bibitem{zheng2013revisiting}
Y.~Zheng, Y.~Kuang, S.~Sugimoto, K.~Astrom, and M.~Okutomi, ``Revisiting the
  pnp problem: A fast, general and optimal solution,'' in \emph{Proceedings of
  the IEEE International Conference on Computer Vision}, 2013, pp. 2344--2351.

\bibitem{von2012lsd}
R.~G. Von~Gioi, J.~Jakubowicz, J.-M. Morel, and G.~Randall, ``{LSD}: A line
  segment detector,'' \emph{Image Processing On Line}, vol.~2, pp. 35--55,
  2012.

\end{thebibliography}

\vspace{-15pt}
\begin{IEEEbiography}[{\includegraphics[width=1in,height=1.25in,clip,keepaspectratio]{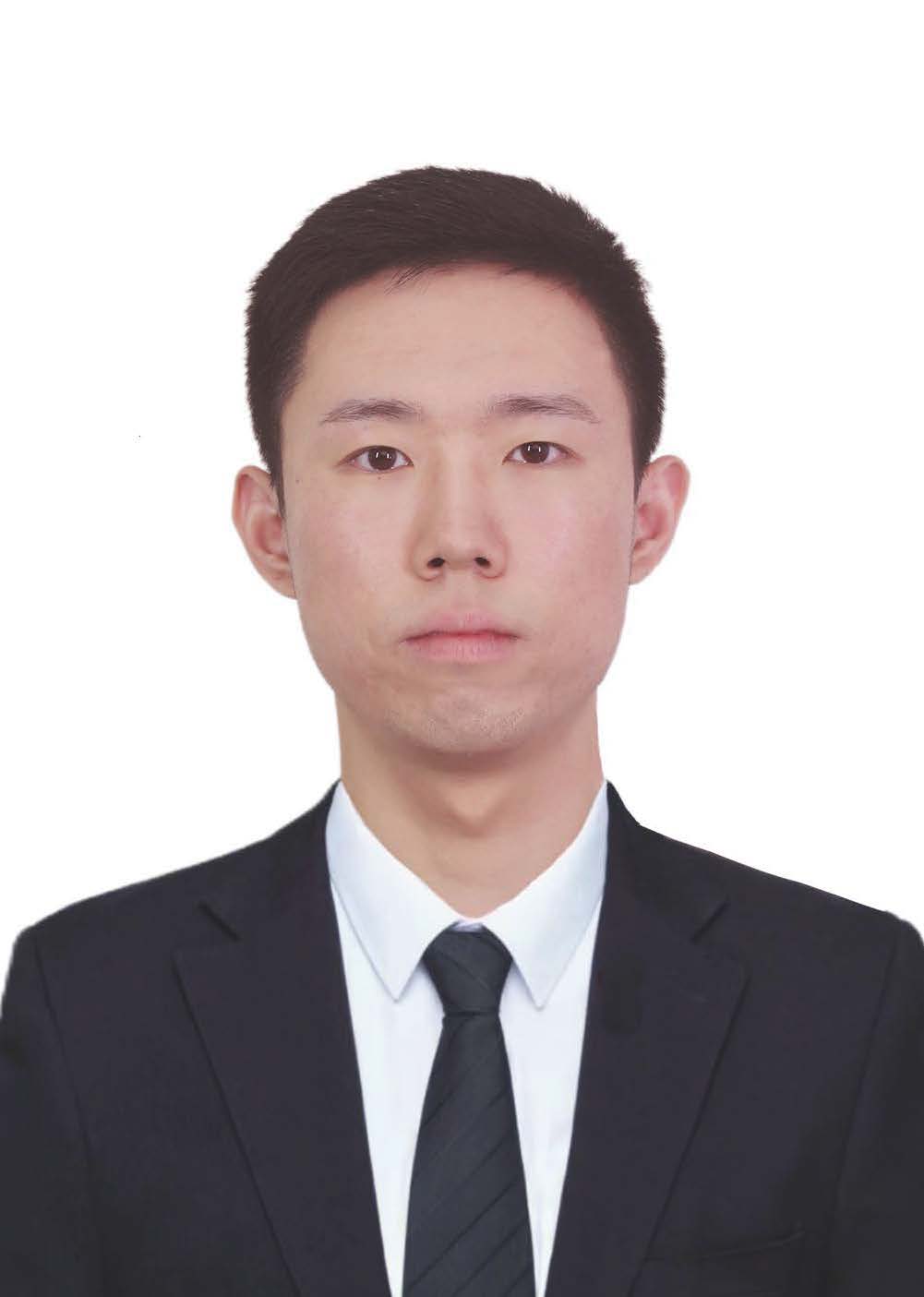}}]{Zibin Liu} received the B.E. degree from Harbin Engineering University, Harbin, China, in 2019 and the M.E. degree from the College of Aerospace Science and Engineering, National University of Defense Technology, Changsha, China, in 2021. He is currently pursuing the Ph.D. degree with the College of Aerospace Science and Engineering, National University of Defense Technology, Changsha, China. His research interests include photogrammetry and 3D vision.
\end{IEEEbiography}

\vspace{-10pt}
\begin{IEEEbiography}[{\includegraphics[width=1in,height=1.25in,clip,keepaspectratio]{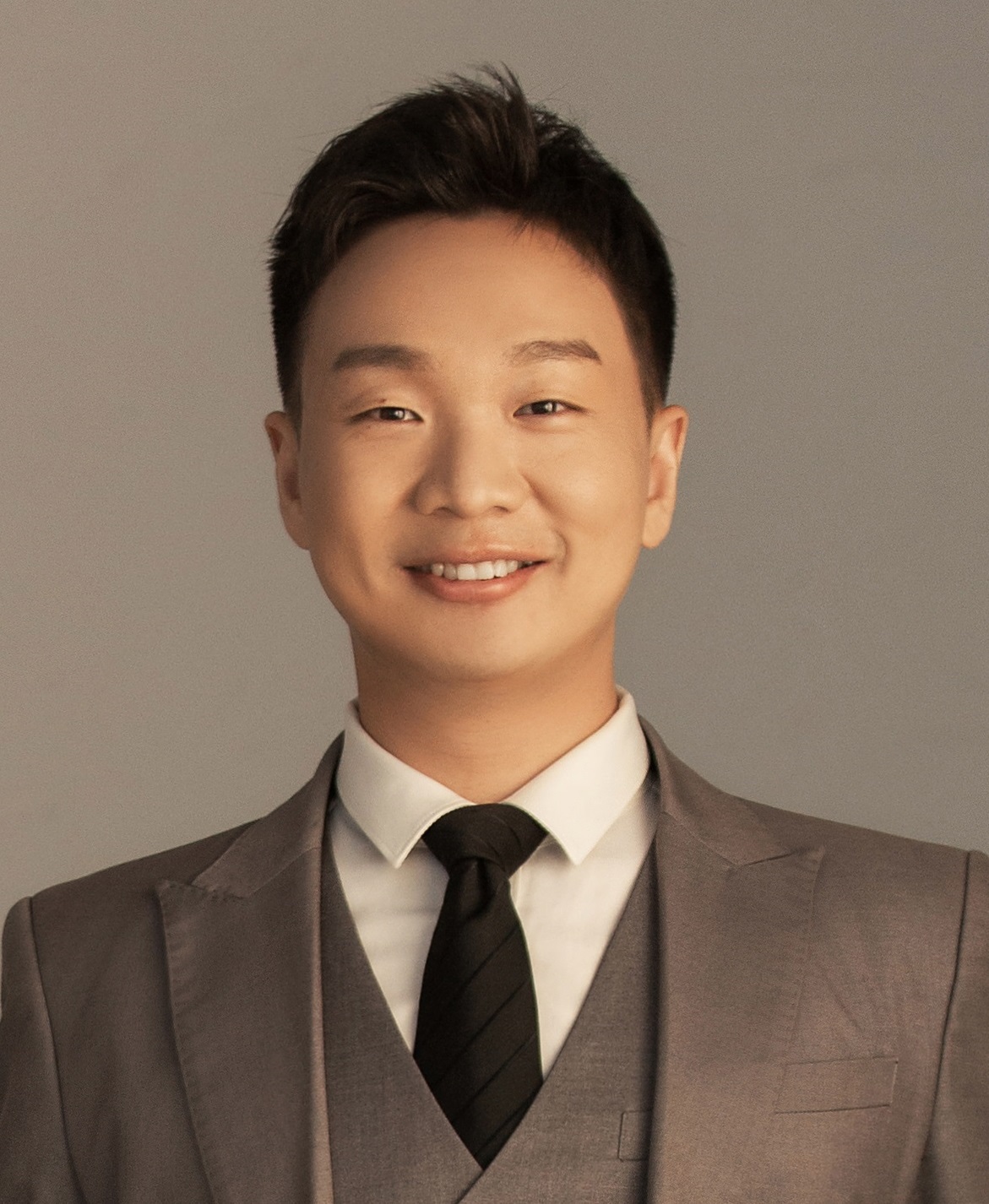}}]{Banglei Guan} (Member, IEEE) received the B.S. degree in Geomatics Engineering from Wuhan University in 2012, and the Ph.D. degree in Aeronautical and Astronautical Science and Technology from the National University of Defense Technology in 2018. From 2016 to 2017, he was a visiting PhD student at Graz University of Technology, advised by Prof. Horst Bischof and Prof. Friedrich Fraundorfer. He is currently an Associate Professor with the National University of Defense Technology. His research interests include Photomechanics and Videometrics. He has published research papers in top-tier journals and conferences, including IJCV, IEEE TIP, IEEE TCYB, CVPR, ECCV, and ICCV.
\end{IEEEbiography}

\vspace{-10pt}
\begin{IEEEbiography}[{\includegraphics[width=1in,height=1.25in,clip,keepaspectratio]{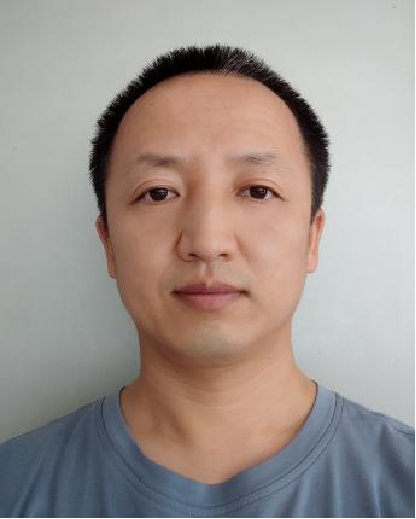}}]{Yang Shang} received the B.S. degree in aerodynamics, and the Ph.D. degree in aeronautical and astronautical science and technology from the National University of Defense Technology, Changsha, China, in 2000 and 2006, respectively. He is currently a Professor with the College of Aerospace Science and Engineering, National University of Defense Technology. His main research interests include photogrammetry, machine vision, and navigation systems.
\end{IEEEbiography}

\vspace{-10pt}
\begin{IEEEbiography}[{\includegraphics[width=1in,height=1.25in,clip,keepaspectratio]{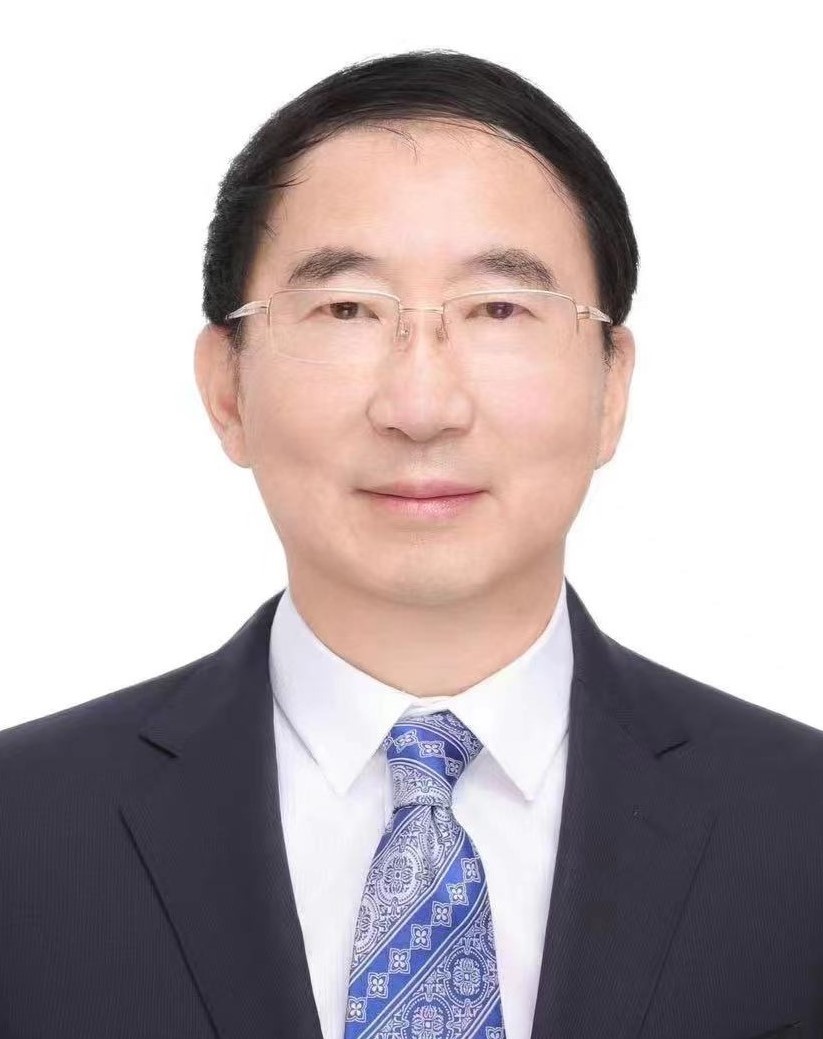}}]{Qifeng Yu} received the B.S. degree from Northwestern Polytechnic University, Xian, China, in 1981, the M.S. degree from the National University of Defense Technology, Changsha, China, in 1984, and the Ph.D. degree from Bremen University, Bremen, Germany, in 1996. Currently, he is a Professor at the National University of Defense Technology. He is a Member of the Chinese Academy of Sciences. He has authored three books and published over 200 papers. His current research fields are image measurement and vision navigation.
\end{IEEEbiography}

\vspace{-10pt}
\begin{IEEEbiography}[{\includegraphics[width=1in,height=1.25 in,clip,keepaspectratio]{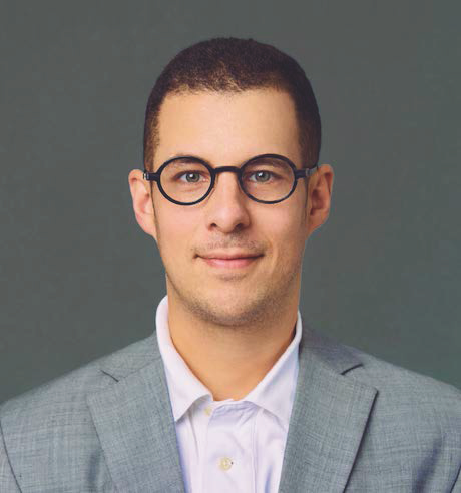}}]{Laurent Kneip} (Senior Member, IEEE) received the Dipl.-Ing. Univ. degree in mechatronical engineering from Friedrich-Alexander University, Erlangen, Germany, in 2008, and the Ph.D. degree from DMAVT, ETH Zurich, Zürich, Switzerland.

He was with Autonomous Systems Lab, Zürich. He is currently a Tenured Associate Professor with ShanghaiTech University, Shanghai, China, where he founded and directs the Mobile Perception Laboratory. He is also the Director of the ShanghaiTech Automation and Robotics Center, Shanghai. He has authored or coauthored in countless publications in top robotics and computer vision venues. He is the main author of OpenGV. His research focuses on enabling intelligent mobile systems to use vision for real-time 3-D perception of the environment.

Dr. Kneip was the recipient of the ARC Discovery Early Career Researcher Award (DECRA) in 2015, the Marr Prize (honourable mention) in 2017, International Young Scholar Award from NSFC in 2019, and the International Excellent Young Scientist Award from NSFC in 2022.
\end{IEEEbiography}

\vfill

\end{document}